\newsavebox\CBox
\def\textBF#1{\sbox\CBox{#1}\resizebox{\wd\CBox}{\ht\CBox}{\textbf{#1}}}
\begin{document}
\pagestyle{headings}
\mainmatter

\title{Learning latent representations across multiple data domains using Lifelong VAEGAN} 

\titlerunning{Accepted as a conference paper at ECCV 2020} 
\authorrunning{Accepted as a conference paper at ECCV 2020} 
\author{Fei Ye and Adrian G. Bors}
\institute{Department of Computer Science, University of York, York YO10 5GH, UK}
\setlength{\textfloatsep}{2pt}
\setlength{\abovedisplayskip}{3pt} 
\setlength{\belowdisplayskip}{3pt}
\setlength{\abovecaptionskip}{0cm}

\maketitle

\begin{abstract}
The problem of catastrophic forgetting occurs in deep learning models trained on multiple databases in a sequential manner. Recently, generative replay mechanisms (GRM), have been proposed to reproduce previously learned knowledge aiming to reduce the forgetting. However, such approaches lack an appropriate inference model and therefore can not provide latent representations of data. In this paper, we propose a novel lifelong learning approach, namely the Lifelong VAEGAN (L-VAEGAN), which not only induces a powerful generative replay network but also learns
meaningful latent representations, benefiting representation learning. L-VAEGAN can allow to automatically embed the information associated with different domains into several clusters in the latent space, while also capturing semantically meaningful shared latent variables, across different data domains. The proposed
model supports many downstream tasks that traditional generative replay methods can not, including interpolation and inference across different
data domains.

\keywords{Lifelong learning, Representation learning, Generative modeling, VAEGAN model.}
\end{abstract}

\section{Introduction}

The lifelong learning framework describes an intelligent learning process capable of remembering all previously learned knowledge from multiple sources, such as different databases \cite{LifeLong_review}. The ability of continuous, or lifelong learning, is an inherent characteristic of humans and animals, which helps them to adapt to the environment during their entire life. However, such characteristics remain an open challenge for deep learning models. The current state-of-the-art deep learning approaches perform well on many individual databases \cite{ScalObjDet,R-CNN}, but suffer from catastrophic forgetting when attempting to learn data associated with new tasks \cite{Expertgate,Lifelong_sentiment,EvidenceLifelong,EncoderLifelong,Lifelong_hierarchical,Lifelong_expandable}. For example when a deep neural network is trained on a new database, its parameters are updated in order to learn new information while their previous values are lost. Consequently, their performance on the previously learnt tasks degenerates.

In order to alleviate the catastrophic forgetting problem, memory-based approaches use a buffer to store a small subset of previously seen data samples \cite{OnlineContious,GradientLifelong,TinyLifelong}. However, such approaches cannot be seen as lifelong learning models and they do not scale well when increasing the number of databases defining different tasks. Shin {\em et al.} \cite{Generative_replay} proposed a learning model employing the Generative Replay Mechanism (GRM). The idea of the GRM is to train a generative replay network to reproduce previously learnt knowledge by using adversarial learning. A classifier is then trained using jointly generative replay data and data sampled from the current database. This approach was only applied on prediction tasks. Although recent studies, such as the one from \cite{MemoryReplayGAN}, have been used to generate images from new classes without forgetting, generative replay approaches do not learn the representation of data and therefore can not be extended to be used in a broad range of tasks. Learning meaningful and disentangled representations of data was shown to benefit many tasks, but they have not been explored so far within the lifelong learning methodology \cite{Lifelong_VAE,LifelongUnsupervisedVAE}. In this paper, we propose a new lifelong learning model which not only learns a GRM but also induces accurate inference models, benefiting on representation learning. 

This research study brings the following contributions~: 
\begin{itemize}
	\item [1)] In order to address the drawbacks of generative replay approaches, we propose a novel lifelong learning model aiming to learn informative latent variables over time.	
	\item [2)] We show that the proposed lifelong learning model can be extended for unsupervised, semi-supervised and supervised learning with few modifications.
    \item[3)] We propose a two-step optimization algorithm to train the proposed model. The latent representation learned by the proposed model can capture both task-specific generative factors and semantic meaningful shared latent variables across different domains over time.
    \item [4)] We provide a theoretical insight into how GRM models are used for lifelong learning in artificial systems. 
\end{itemize}

\section{Related works}
\label{RelWork}

The lifelong learning was approached in previous research studies from three different perspectives: by using regularization, dynamic architectures, and by employing memory replay. Regularization approaches in order to alleviate catastrophic forgetting add an auxiliary term that penalizes changes in the weights when the model is trained on a new task  \cite{BoostingTransfer,Distilling_nets,LessForgetting,EWC,Lwf,LearnAdd,LifeLong_combination,Generative_replay}. Dynamic architectures would increase the number of neurons and network layers in order to adapt to learning new information \cite{PNN}. Most memory replay approaches are using generative models, such as either Generative Adversarial Networks (GANs) \cite{GAN} or Variational Autoencoders (VAEs) \cite{VAE}) to replay the previously learnt knowledge. For instance, Wu {\em et al.} \cite{MemoryReplayGAN} proposed a novel lifelong generative framework, namely Memory Replay GANs (MeRGANs), which mainly generates images from new categories under the lifelong learning setting. The Lifelong GAN \cite{LGAN_conditional} employs image to image translation. However, both models from \cite{MemoryReplayGAN,LGAN_conditional} lack an image inference procedure and Lifelong GAN would need to load all previously learnt data for the generation task. Approaches employing both generative and inference mechanisms are based on the VAE framework \cite{Lifelong_VAE,GenerativeLifelong}. However, these approaches have degenerating performance when learning high-dimensional data, due to lacking a powerful generator. 

Hybrid VAE-GAN methods learn an inference model from a GAN model, which can also capture data representations, which is specific to the VAE. Adversarial learning is performed in order to match either the data distribution \cite{HybridGAN}, the latent variables distribution \cite{AAE}, or their joint distributions \cite{VAE_symmetric,AFL,AdLearnInf,Alice,AdbVB,ASymVAE,Veegan}. These methods perform well only when trained on a single dataset and their performance would degenerate when learning a new task. 

This paper is the first research study to propose a novel hybrid lifelong learning model, which not only addresses the drawback of the existing hybrid methods but also provides inference mechanisms for the GRM, benefiting on many downstream tasks across domains under the lifelong learning framework. The approach proposed in this paper also addresses disentangled representation learning \cite{VI_disentangled} in the context of lifelong learning. Many recent approaches would aim to modify the VAE framework in order to learn a meaningful representation of the data by imposing a large penalty on the Kullback-Leibler (KL) divergence term \cite{UnVAE,JVAE,baeVAE} or on the total correlation  latent variables \cite{IsolatingVAE,VAETCE,DC_Disentanglement,DisentanglingByFactorising}. These approaches perform well on independent and identically distributed data samples from a single domain. However, they are unable to learn the information from piecewise changing stationary data from multiple databases, because they suffer from catastrophic forgetting. 

\section{The Lifelong VAEGAN}
\label{LifeL-VAEGAN}

In this section, we introduce the optimization algorithm used for training the proposed model when learning several databases without forgetting.

\subsection{Problem formulation}

The lifelong learning problem consists of learning a sequence of of $K$ tasks, each characterized by a distinct database, corresponding to the data distributions $p({\bf x}^1), p({\bf x}^2), \ldots, p({\bf x}^K)$. During the $k$-th database learning, we only access the images sampled from $p({\bf x}^k)$. Most existing lifelong learning approaches focus on prediction or regression tasks. Meanwhile, in this research study we focus on modelling the overall data distribution $p ({\bf x})$, by learning latent representations over time:
\begin{equation}
\begin{aligned}
p ({\bf x}): = \int \prod\limits_{i = 1}^K p ({\bf x}^i|{\bf z}) p({\bf z}) d{\bf z}
\end{aligned}
\end{equation}
where ${\bf z}$ represents the latent variables defining the information of all previously learnt databases. Besides addressing unsupervised learning, we also incorporate discrete variables into the optimization path in order to capture category discriminating information. Moreover, we also consider the semi-supervised learning problem where we consider that in each dataset we only have some labelled data while the rest are unlabeled.

\subsection{Data generation from prior distributions}

In the following we aim to learn two separate latent representations for capturing discrete and continuous variations of data. The discrete data are denoted as ${\bf c} = \{ {\bf c}_i | i=1, \ldots, L \}$ where $L$ is the dimension of the discrete variable space, while the continuous variables ${\bf z}$ are sampled from a normal distribution ${\cal N} ( {0,{\mathop{\rm I}\nolimits} })$. We also consider the domain variable ${\bf a} = \{{\bf a}_j | j=1,\ldots,K\}$, defining each database and aiming to capture the information characterizing its task. The generation process considering the three latent variables ${\bf c}$, ${\bf z}$ and ${\bf a}$ is defined as:
\begin{equation}
\begin{aligned}
&{\bf c} \sim {\rm{Cat}}\left( {K = L,p = 1/K} \right),\bf{z} \sim {\cal N} ( {0,{\mathop{\rm I}\nolimits} }),\\& {\bf a} \sim {\rm{Cat}}\left( p_1,p_2,\ldots,p_K \right), {\bf x} \sim p_{\theta} ({\bf x} | {\bf z},{\bf a},{\bf c}),
\end{aligned}
\end{equation}
where $\rm{Cat}(\cdot)$ is the Categorical distribution, and $p_{\bf{\theta}}({\bf x} | {\bf z},{\bf a},{\bf c})$ is the distribution characterizing the generator implemented by a neural network with trainable parameters, $\theta$.
By incorporating the domain variables ${\bf a}$ in the inference model helps to generate images characteristic to a specific task. We consider the Wasserstein GAN (WGAN) \cite{WGAN} loss with the gradient penalty \cite{ImproveWGAN}, which is defined by~:
\begin{equation}
\begin{aligned}
 \mathop {\min }\limits_G \mathop {\max }\limits_D \mathcal L_{GAN}^G (\theta ,\omega ) = &
\mathbb{E}_{{\bf z} \sim p ( {\bf z}),
{\bf c} \sim p ( {\bf c} ), {\bf a} \sim p ( {\bf a} )} [ D ( G ( {\bf c},{\bf z},{\bf a}) ) ] - \\
& \mathbb{E}_{p ( {\bf x} )} [ D ( {\bf x}) ] + \lambda \mathbb{E}_{\tilde{\bf x} \sim p (\tilde{\bf x})} [ ( \| \nabla_{\tilde{\bf x}} D ( \tilde{\bf x})\|_2 - 1 )^2 ]
\end{aligned}
\label{Generate}
\end{equation}
 where we introduce a Discriminator $D$, defined by the trainable parameters $\omega$, $p({\bf x})$ denotes the true data distribution, and the third term is the gradient weighted by the penalty $\lambda$. The adversarial loss allows the Generator and Discriminator to be trained alternately such that the Discriminator aims to distinguish real from generated data, while the Generator tends to fool the Discriminator through aiming to generate realistic data \cite{WGAN,GAN}. 

\subsection{Training the inference model}

Most GAN-based lifelong methods  \cite{Generative_replay,MemoryReplayGAN,LifelongGAN} do not learn an accurate inference model and therefore can not derive a meaningful data representation. For the model proposed in this paper, we consider three differentiable non-linear functions $f_\varsigma(\cdot)$, $f_\varepsilon(\cdot)$, $f_\delta(\cdot)$, aiming to infer three different types of latent variables $\{ {\bf z},{\bf c},{\bf a} \}$. We implement $f_\varsigma(\cdot)$ by using the Gaussian distribution ${\cal N} (\mu ,\sigma )$ where $\mu  = \mu_\varsigma ( {\bf x})$ and $\sigma  = 
\sigma_\varsigma ( {\bf x} )$ are given by the outputs of a neural network with trainable parameters $\varsigma$. We use the reparameterization trick \cite{VAE,VAE2} for sampling 
${\bf z} = \mu  + \pi  \otimes \sigma $, where $\pi$ is a random noise vector sampled from ${\cal N} ( 0,\mathop{\rm I}\nolimits )$, in order to ensure end-to-end training.

\noindent \textbf{Discrete variables.} We can not sample the discrete latent variables ${\bf a}$ and ${\bf c}$ from $f_\varepsilon(\cdot)$ and $f_\delta(\cdot)$, respectively, because the categorical representations are non-differentiable. In order to mitigate this, we use the Gumbel-Max trick \cite{GumbleTrick,GumbleTrick2} for achieving the differentiable relaxation of discrete random variables. The Gumbel-softmax trick was also used in \cite{JVAE,Gumble_softmax,concrete,KDGAN} and its capability of reducing the variation of gradients was shown in \cite{KDGAN}. 

The sampling process of discrete latent variables is defined as:
\begin{equation}
\begin{aligned}
{\bf a}_j = \frac{\exp ( ( \log {\bf a}'_j + {\bf g}_j )/ T )}{\sum\limits_{i=1}^K \exp (( \log {{\bf a'}_i} + {\bf g}_i )/ T )}
\end{aligned}
\end{equation}
 where ${\bf a'}_i$ is the $i$-th entry of the probability defined by the softmax layer characterizing $f_\varepsilon(\cdot)$ and ${\bf a}_j$ is the continuous relaxation of the domain variable, while ${\bf g}_k$ is sampled from the distribution $\mathop{\rm Gumbel}\nolimits (0,1)$ and $T$ is the temperature parameter that controls the degree of smoothness. We use the Gumbel softmax trick for sampling both the domain ${\bf a}$ and the discrete ${\bf c}$ variables. 

\noindent \textbf{The log-likelihood objective function.} GANs lack an inference mechanism, preventing them to capture data representations properly. In this paper we propose to maximize the sample log-likelihood for learning the inference models, defined by $ p({\bf x}) = \int \int \int p({\bf x}|{\bf z},{\bf a},{\bf c}) p({\bf z},{\bf a},{\bf c})\:d{\bf z}\:d{\bf a}\:d{\bf c}$, which is intractable in practice. We therefore derive the following lower bound on the log-likelihood, which is characteristic to VAEs, by introducing variational distributions:
\begin{equation}
\begin{aligned}
\mathcal{L}_{\rm{VAE}}({\bf \theta},{\bf \varsigma },{\bf \varepsilon},{\bf \delta })  = & \mathbb{E}_{q_{\varsigma ,\varepsilon ,\delta }}({\bf z},{\bf a},{\bf c}|{\bf x}) \log [p_\theta ({\bf x}|{\bf z},{\bf a},{\bf c})] - D_{KL}[q_\varsigma ({\bf z}|{\bf x})||p({\bf z})] \\&- \mathbb{E}_{q_\varsigma ({\bf z}|{\bf x})}D_{KL}[q_\varepsilon({\bf a}|{\bf x})||p({\bf a}|{\bf z})] - {D_{KL}}[{q_\delta }({\bf c}|{\bf x})||p({\bf c})]
\end{aligned}
\label{L_VAE}
\end{equation}
where $q_\varsigma ({\bf z} |{\bf x})$, ${{q_{\varepsilon }}({\bf{a}}|{\bf{z}})}$, $q_\delta ({\bf c}|{\bf x})$ are variational distributions modelled by $f_\varsigma(\cdot)$, $f_\varepsilon(\cdot)$, $f_\delta(\cdot)$, respectively. For the third term from (\ref{L_VAE}), we sample from the empirical distribution and then sample ${\bf z}$ from $q_\delta ({\bf c}|{\bf x})$. $p({{\bf{a}}|{\bf z}})$ is the prior distribution ${\rm Cat}(p_1,\ldots,p_K)$, where $p_i$ denotes the probability of the sample belonging to the $i$-th domain. We consider $q_\varepsilon({\bf a}|{\bf z})$ as the task-inference model which aims to infer the task ID for the given data samples. 

\begin{figure}[htbp]
	\centering
		\setlength{\abovecaptionskip}{0pt}   
    \setlength{\belowcaptionskip}{-10pt}
		\includegraphics[scale=0.54]{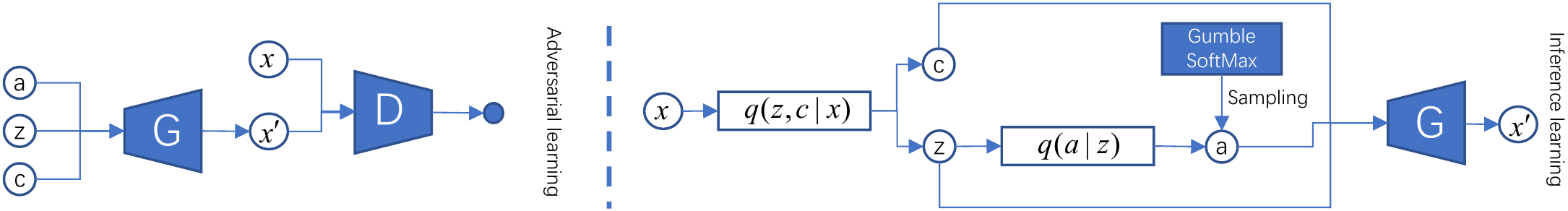}  \\
		(a)  Generator network  \qquad \qquad \qquad \qquad \qquad (b) Inference network \\
	\caption{The graph structure for the proposed Lifelong VAEGAN (L-VAEGAN) model, where G and D denote the Generator and Discriminator, respectively. }
	\label{Figure1}
\end{figure}

For the supervised learning setting, the auxiliary information such as the class labels can be used to guide the inference model. We minimise the cross-entropy loss $\eta (\cdot,\cdot)$ for $q_\delta ({\bf c}|{\bf x})$ and $q_\varepsilon ( {\bf a}|{\bf z})$ as:
\begin{equation}
\begin{aligned}
{\mathcal L}_{\bf a} (\varepsilon) = \mathbb{E}_{({\bf x},{\bf a}^*) \sim ( {\bf X},{\bf A}),{\bf z} \sim {q_\varsigma }({\bf{z}}|{\bf{x}})} \eta ( q_{\varepsilon} ({\bf a}|{\bf z}),{\bf a}^* )
\end{aligned}
\end{equation}
\begin{equation}
\begin{aligned}
{\mathcal L}_{\bf c}(\delta ) = \mathbb{E}_{{( {\bf x},{\bf y}} ) \sim ( {\bf X},{\bf Y} )} \eta ( q_{\delta} ( {\bf c}|{\bf x}),{\bf y} )
\end{aligned}
\end{equation}
where ${\bf X}$ and ${\bf Y}$ represent the empirical data and target distributions, respectively.  ${\bf a}^*$ is the variable drawn from ${\bf A}$ which represents the Categorical distribution ${\rm{Cat}}(m_1,\ldots,m_k)$, where $m_i$ is the probability of seeing $i$-th task, characterizing the corresponding database. The graph structure of the Generator and Inference networks of the proposed Lifelong VAEGAN (L-VAEGAN) is shown in Figs.~\ref{Figure1}-a and \ref{Figure1}-b, respectively, where the variable ${\bf a}$ is conditioned on ${\bf z}$. The proposed model is flexible to be extended for recognizing new tasks by automatically appending the domain variable ${\bf a}$ and optimizing the task-inference model $q_{\varepsilon }({\bf a}|{\bf z})$ when faced with learning a new task.

\vspace{-10pt}
\section{Theoretical analysis of the GRM}
\label{TheoGenReplay}
\vspace{-5pt}

In this section, we analyze the GRM used in lifelong learning. The proofs are provided in Appendix A-G (https://github.com/dtuzi123/LifelongVAEGAN).

\begin{definition}
We define the distribution modelling the lifelong learned data as $p(\tilde{\bf x}^t) $, which is encoded through $G_{\gamma_t}({\bf z,c,d})$. The assumption is that the network has learnt the information from all given databases $t=1,\ldots,K$, and this information is stored, refined and processed across various tasks, where $\gamma_t$ is the generator parameter updated after the $t$-th task learning. 
\end{definition}

\begin{definition}
Let us define 
\begin{equation}
p({\tilde{\bf x}^t}|\tilde{\bf x}^{t-1},{\bf x}^t) = \exp ( - \Gamma (p(\tilde{\bf x}^{t-1},{\bf x}^t),p(\tilde{\bf x}^t))) 
\label{Def2}
\end{equation}
as the probability of generated data $\tilde{\bf x}^t$ when observing $\tilde{\bf x}^{t-1}$ and ${\bf x}^t$, where $\Gamma(\cdot)$ is a probabilistic measure of comparison between two distributions, 
which can be the {\cal f}-divergence \cite{f-GAN}, or the Wasserstein distance \cite{WGAN} (Earth-mover distance).
\end{definition}

\begin{theorem}
By marginalizing over $\tilde{\bf x}^{t-1}$ and ${\bf x}^t$, on $p(\tilde{\bf x}^t| \tilde{\bf x}^{t-1},{\bf x}^t)$, the resulting marginal distribution $p(\tilde{\bf x}^t)$ encodes the statistical correlations from all previously learnt distributions.
\end{theorem}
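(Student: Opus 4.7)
The plan is to prove the theorem by induction on $t$, unfolding $p(\tilde{\bf x}^t)$ via Definition 2 until every task distribution $p({\bf x}^1),\ldots,p({\bf x}^t)$ appears explicitly. First I would write the marginal
\begin{equation}
p(\tilde{\bf x}^t) = \int\!\!\int p(\tilde{\bf x}^t \mid \tilde{\bf x}^{t-1},{\bf x}^t)\, p(\tilde{\bf x}^{t-1},{\bf x}^t)\, d\tilde{\bf x}^{t-1}\, d{\bf x}^t,
\end{equation}
and, invoking the independence of the replay samples from the fresh task-$t$ data, factor $p(\tilde{\bf x}^{t-1},{\bf x}^t)=p(\tilde{\bf x}^{t-1})\,p({\bf x}^t)$. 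Substituting equation (\ref{Def2}) then produces an expression in which $p(\tilde{\bf x}^t)$ depends explicitly on both $p(\tilde{\bf x}^{t-1})$ and $p({\bf x}^t)$ through the kernel $\exp(-\Gamma(\cdot,\cdot))$.

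Second, for the inductive step I would assume that $p(\tilde{\bf x}^{t-1})$ already aggregates the information of $p({\bf x}^1),\ldots,p({\bf x}^{t-1})$ and then apply the same unfolding identity recursively. Expanding $p(\tilde{\bf x}^{t-1})$ in terms of $p(\tilde{\bf x}^{t-2})$ and $p({\bf x}^{t-1})$, and continuing down to the base case $t=1$ where $p(\tilde{\bf x}^{1})$ depends solely on $p({\bf x}^{1})$, leads to a nested functional
\begin{equation}
p(\tilde{\bf x}^t) \;=\; F\bigl(p({\bf x}^1),p({\bf x}^2),\ldots,p({\bf x}^t)\bigr),
\end{equation}
built by composing the $\exp(-\Gamma(\cdot,\cdot))$ kernels across all intermediate stages. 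This exhibits, in closed form, the dependence of the replay marginal on every previously learnt task distribution.

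The main obstacle is making the informal phrase \emph{encodes the statistical correlations} mathematically precise: the kernel $\exp(-\Gamma(\cdot,\cdot))$ from Definition 2 is an unnormalized potential rather than a genuine conditional density, so one must argue either (i) up to a normalization constant the kernel concentrates $p(\tilde{\bf x}^t)$ on those marginals that minimise $\Gamma(p(\tilde{\bf x}^{t-1},{\bf x}^t),p(\tilde{\bf x}^t))$, or (ii) in the adversarial-optimum (infinite-capacity) limit, the minimiser of $\Gamma$ coincides with the mixture of the replay and new-task distributions, so the recursion collapses to a mixture over $p({\bf x}^1),\ldots,p({\bf x}^t)$. Once this semantic is fixed, the induction itself is routine; the delicate part is justifying that each unfolding step genuinely preserves, rather than loses, the contribution of the oldest tasks — presumably relying on the fact that $\Gamma$ is a proper divergence so that $\Gamma=0$ forces equality of distributions and the recursion does not discard information.
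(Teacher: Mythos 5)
Your proposal follows essentially the same route as the paper: marginalize over $\tilde{\bf x}^{t-1}$ and ${\bf x}^t$, factor the joint as $p(\tilde{\bf x}^{t-1})\,p({\bf x}^t)$, and unfold recursively by induction down to $p(\tilde{\bf x}^1)$, which is exactly the nested integral the paper writes as its one-line proof. Your added worry about $\exp(-\Gamma(\cdot,\cdot))$ not being a normalized conditional density, and about what \emph{encodes the statistical correlations} formally means, is a fair criticism that the paper itself does not address, but it does not change the fact that the argument is the same.
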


\begin{proof}
By using mathematical induction over the lifelong learning of the probabilities associated with various tasks, the marginal distribution is rewritten as:

\begin{equation}
\begin{aligned}
p(\tilde{\bf x}^t) = & \int  \ldots \int p(\tilde{\bf x}^1) \prod\limits_{i = 0}^{t-2} p (\tilde{\bf x}^{t - i} | \tilde{\bf x}^{t-i-1},{\bf x}^{t-i})  \prod\limits_{i = 0}^{t-2} p({\bf x}^{t-i})d \tilde{\bf x}^1 \ldots 
d \tilde{\bf x}^{t - 1} d{\bf x}^2 \ldots d{\bf x}^t
\end{aligned}
\end{equation}
\end{proof} 

\begin{lemma}
The data probability $p(\tilde{\bf x}^t)$ approximates the true joint distribution $\prod\nolimits_{i = 1}^t {p({\bf x}^i)}$ when all previously learnt distributions are the exact approximations to their target distributions while learning every given task.
\end{lemma}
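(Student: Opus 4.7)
The plan is to reduce the claim to an induction on $t$, starting from the integral expression provided by Theorem 1. The hypothesis of "exact approximation at every task" is naturally cast in terms of Definition 2: if $G_{\gamma_t}$ exactly matches the combined target distribution at step $t$, then $\Gamma(p(\tilde{\bf x}^{t-1},{\bf x}^t), p(\tilde{\bf x}^t))=0$, and (\ref{Def2}) collapses the transition kernel $p(\tilde{\bf x}^t\mid\tilde{\bf x}^{t-1},{\bf x}^t)$ onto the joint of its conditioning variables, yielding $p(\tilde{\bf x}^t)=p(\tilde{\bf x}^{t-1},{\bf x}^t)$. I would then invoke the structure of the GRM training loop: at step $t$ the replay sample $\tilde{\bf x}^{t-1}$ is drawn from the frozen previous generator while ${\bf x}^t$ is sampled independently from the fresh database, so $p(\tilde{\bf x}^{t-1},{\bf x}^t)=p(\tilde{\bf x}^{t-1})\,p({\bf x}^t)$.

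Combining these two observations gives the recursion $p(\tilde{\bf x}^t)=p(\tilde{\bf x}^{t-1})\,p({\bf x}^t)$, which I would unroll by induction. The base case $t=1$ is immediate, since the first generator is trained only on $p({\bf x}^1)$ and the exactness hypothesis forces $p(\tilde{\bf x}^1)=p({\bf x}^1)$. For the inductive step, assuming $p(\tilde{\bf x}^{t-1})=\prod_{i=1}^{t-1}p({\bf x}^i)$, the recursion delivers $p(\tilde{\bf x}^t)=\prod_{i=1}^{t}p({\bf x}^i)$, closing the argument. As a sanity check of consistency with Theorem 1, I would substitute the collapsed kernels into the iterated integral representation of $p(\tilde{\bf x}^t)$ and perform the successive marginalisations over $\tilde{\bf x}^1,\ldots,\tilde{\bf x}^{t-1}$, observing that each intermediate replay variable is integrated out cleanly and only the original task marginals $p({\bf x}^i)$ survive.

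The delicate step will be pinning down the hypothesis precisely enough to justify both the vanishing of $\Gamma$ and the factorisation into independent marginals. In particular, one must argue that the right-hand side of (\ref{Def2}) is to be interpreted as a normalised conditional density after appropriate limiting, rather than as a bare exponential of a divergence, so that $\Gamma\to 0$ corresponds to weak convergence $p(\tilde{\bf x}^t)\Rightarrow p(\tilde{\bf x}^{t-1},{\bf x}^t)$; the induction then goes through by standard approximation arguments. Provided this interpretation is granted, the rest of the proof is a routine telescoping of exact equalities, so the whole burden of the statement falls on making the exactness assumption rigorous.
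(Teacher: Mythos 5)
Your proposal is correct and follows essentially the same route as the paper: the exactness hypothesis makes $\Gamma$ vanish in Definition 2, collapsing each transition kernel $p(\tilde{\bf x}^{t-i}|\tilde{\bf x}^{t-i-1},{\bf x}^{t-i})$ in the iterated integral of Theorem 1, so the successive marginalisations telescope by induction into $p(\tilde{\bf x}^t)=\prod_{i=1}^{t}p({\bf x}^i)$. Your caveat that Definition 2 must be read as a normalised conditional density (rather than a bare $\exp(-\Gamma)$) is a fair observation about the level of rigour involved, but it does not alter the argument.
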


In the following we extend the theoretical analysis on the domain adaptation problem from \cite{Theoretical_lifelong} (Theorem 2) in order to analyze how the knowledge learned by GRMs is lost during the lifelong learning.
\begin{theorem}
Let us consider two vector samples, one corresponding to the generated data $\{ { \nu}_{t'} \in {\rm{R}^s} | { \nu}_{t'} \sim p({\bf \tilde x}^t)\}$ and another corresponding to the real data $\{  {  \nu}_{t} \in {\rm{R}^s} | {  \nu}_{t} \sim p({\bf{x}}^t) \}$ of sizes $n_t$ and $n_{t'}$. Then let $h^t(\cdot)$ be a new learned model trained on ${\nu _{t'}}$. For any $s' > s$ and $a' < \sqrt 2 $, there is a constant $n_0$ depending on $s'$ satisfying that for any $\delta > 0$ and $\min({\nu_t},{n_{t'}}) \ge n_0\max({\delta ^{ - (s' + 2)}},1)$. Then with the probability of at least $1 - \delta $ for all $h^t$, we have:
\begin{equation}
\begin{aligned}
& E\left( {{h^t}} ({\nu _t}) \right) \le E\left( h^t({\nu_{t'}}) \right) + W\left( {{\nu _t},{\nu _{t'}}} \right) + \sqrt {2\log \left( {\frac{1}{\delta }} \right)/a'} \left( {\sqrt {\frac{1}{{{n_t}}}} {\rm{ + }}\sqrt {\frac{1}{{{n_{t'}}}}} } \right){\rm{ + }}D
\end{aligned}
\end{equation}

\noindent where $E( {{h^t}} ({\nu_t})),E( h^t({\nu_{t'}}))$ denote the observed risk for $\nu_t$ and $\nu_{t'}$, respectively, and $W ( \nu_t,\nu_{t'})$ is the Wassenstein distance between $\nu_t$ and $\nu_{t'}$. $D$ is the combined error when we find the optimal model ${h^t}^\prime  =: \arg \min_{h^t \in {\cal H}} (E( {{h^t}} ({\nu _t}))+E( h^t({\nu_{t'}}))$. 
\end{theorem}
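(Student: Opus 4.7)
The plan is to adapt the Wasserstein-based domain-adaptation framework invoked in \cite{Theoretical_lifelong} (Theorem~2) to the generative-replay setting, viewing the generated sample $\nu_{t'}\sim p(\tilde{\bf x}^t)$ as the source and the real sample $\nu_t\sim p({\bf x}^t)$ as the target. The goal is to relate the empirical target risk $E(h^t(\nu_t))$ to quantities that the replay mechanism actually controls: the empirical source risk $E(h^t(\nu_{t'}))$ and the empirical Wasserstein distance $W(\nu_t,\nu_{t'})$. Throughout I will treat the loss as bounded and $1$-Lipschitz in its first argument so that the Kantorovich--Rubinstein duality is available.

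First, I introduce the population measures $\mu_t=p({\bf x}^t)$ and $\mu_{t'}=p(\tilde{\bf x}^t)$ and use the standard Wasserstein domain-adaptation inequality: for any $h^t\in\mathcal{H}$,
\begin{equation}
\mathbb{E}_{\mu_t}[\ell(h^t,\cdot)]\;\le\;\mathbb{E}_{\mu_{t'}}[\ell(h^t,\cdot)]\;+\;W(\mu_t,\mu_{t'})\;+\;D,
\end{equation}
where $D$ is exactly the combined-optimal-risk term attached to ${h^t}'=\arg\min_{h^t\in\mathcal{H}}(E(h^t(\nu_t))+E(h^t(\nu_{t'})))$ in the statement. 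This step uses Kantorovich--Rubinstein to turn the difference of expectations into $W(\mu_t,\mu_{t'})$ and then a triangle inequality through the optimal hypothesis.

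Next, I pass from population to empirical quantities. The two expected risks are replaced by $E(h^t(\nu_t))$ and $E(h^t(\nu_{t'}))$ via a standard Hoeffding/McDiarmid argument, while the population distance $W(\mu_t,\mu_{t'})$ is replaced by its empirical surrogate $W(\nu_t,\nu_{t'})$ by two applications of a Bolley--Guillin--Villani concentration bound: under a $T_1$ transport-entropy inequality with constant $a'<\sqrt{2}$, one has with probability at least $1-\delta$
\begin{equation}
W(\mu,\widehat\mu_n)\;\le\;\sqrt{\tfrac{2\log(1/\delta)}{a'}}\,\sqrt{\tfrac{1}{n}},
\end{equation}
provided $n\ge n_0(s')\max(\delta^{-(s'+2)},1)$ for some $s'>s$. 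Applying this to both $(\mu_t,\nu_t)$ and $(\mu_{t'},\nu_{t'})$, using the triangle inequality for $W$, and a union bound over the two bad events, yields the two square-root terms in the claimed inequality and forces the sample-size condition $\min(n_t,n_{t'})\ge n_0\max(\delta^{-(s'+2)},1)$. Substituting back into the population inequality produces the stated bound.

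The main obstacle is the concentration step: the raw rate of convergence of the empirical Wasserstein distance is $n^{-1/s}$, which suffers from the curse of dimensionality, and this is precisely why the statement weakens the ambient dimension $s$ to an effective exponent $s'>s$ and absorbs the excess into the $\delta^{-(s'+2)}$ sample-size requirement. Carefully matching the constants between the Bolley--Guillin--Villani tail inequality (which fixes the factor $\sqrt{2/a'}$ and forces $a'<\sqrt{2}$) and the statement above is the delicate part; once it is settled, handling $D$ and the passage from risk expectations to their empirical counterparts is routine via Lipschitzness of $\ell$ and standard concentration.
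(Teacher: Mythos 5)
Your proposal is correct and follows essentially the same route as the paper: the theorem is obtained by instantiating the Wasserstein domain-adaptation bound of Redko et al. \cite{Theoretical_lifelong} with the generated sample $\nu_{t'}$ as the source domain and the real sample $\nu_t$ as the target, combining the Kantorovich--Rubinstein population bound with the combined-error term $D$ and the Bolley--Guillin--Villani concentration of the empirical Wasserstein distance, which is exactly where the constants $a'<\sqrt{2}$, the exponent $s'>s$, and the sample-size condition $\min(n_t,n_{t'})\ge n_0\max(\delta^{-(s'+2)},1)$ originate. No substantive gap remains beyond the bookkeeping you already flag.
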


This theorem demonstrates that the performance of a model $h^t$ degenerates on the empirical data distribution $p({\bf x}^t)$. From Theorem 2, we conclude that the lifelong learning becomes a special domain adaptation problem in which the target and source domain are empirical data distributions from the current task and the approximation distribution $G_{\gamma_t}({\bf z,c,d})$. 

\begin{lemma}
From Theorem 2 we have a bound on the accumulated errors across tasks during the lifelong learning.
\begin{equation}
\begin{aligned}
&\sum\nolimits_{i = 1}^K E\left( h^K(\nu_i) \right) \le  \sum\nolimits_{i = 1}^K E\left( h^K (\nu_{i^{(K)}}) \right) +\\& {\bf{W}}\left( \nu_i,\nu_{i^{(K)}} \right)  + \sqrt {2\log \left( {\frac{1}{\delta }} \right)/a'} \left( {\sqrt {\frac{1}{{{n_i}}}}  + \sqrt {\frac{1}{{{n_{i^{(K)}}}}}} } \right) +D_{(i^{(K-1)},i^{(K)})} ,
\end{aligned}
\end{equation}
where $E\left( h^K (\nu_{i^{(K)}}) \right)$ denotes the observed risks on the probability measure ${{\nu _{{i^{(K)}}}}}$ formed by samples drawn from $p({{{\bf{\tilde x}}}^i})$, after they have been learned across K tasks. $D_{(i^{(K-1)},i^{(K)})}$ is the combined error of an optimal model\\ 
\begin{equation}
h^* = \arg \min_{h \in {\cal H}} (E\left( h^K (\nu_{i^{(K-1)}}) \right)+E\left( h^K (\nu_{i^{(K)}}) \right))
\end{equation}
\end{lemma}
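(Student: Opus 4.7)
The plan is to derive Lemma 2 as a direct aggregation of Theorem 2 applied to each of the $K$ tasks separately. First I would fix an index $i \in \{1,\ldots,K\}$ and identify the ``current task'' measure in Theorem 2 with the empirical sample $\nu_i$ drawn from $p({\bf x}^i)$, and the ``approximation'' measure with $\nu_{i^{(K)}}$, the empirical measure induced by the replay generator $G_{\gamma_K}$ on the $i$-th task after all $K$ tasks have been processed. Because Theorem 2 is stated for an arbitrary pair of empirical source and target measures and an arbitrary hypothesis, the single-task inequality
\[ E(h^K(\nu_i)) \le E(h^K(\nu_{i^{(K)}})) + W(\nu_i,\nu_{i^{(K)}}) + \sqrt{2\log(1/\delta)/a'}\Bigl(\sqrt{1/n_i} + \sqrt{1/n_{i^{(K)}}}\Bigr) + D_{(i^{(K-1)},i^{(K)})} \]
follows at once, where $D_{(i^{(K-1)},i^{(K)})}$ plays the role of the constant $D$ of Theorem 2 specialized to the pair $(\nu_{i^{(K-1)}},\nu_{i^{(K)}})$ produced by two consecutive generator updates.

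The second step is simply to sum these single-task inequalities over $i=1,\ldots,K$. Every term on the right-hand side is task-indexed and additive, so the sum reproduces the expression stated in the lemma. To make the probabilistic statement hold simultaneously across all $K$ indices I would invoke a union bound over the $K$ events; the resulting $\log K$ overhead is absorbed into the $\log(1/\delta)$ factor (equivalently, one may replace $\delta$ by $\delta/K$ without changing the asymptotic form). The sample-size hypothesis of Theorem 2 transfers task-by-task under the requirement $\min(n_i,n_{i^{(K)}}) \ge n_0\max(\delta^{-(s'+2)},1)$.

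The main obstacle is justifying that the ``combined error'' contribution for task $i$ may be written in the one-step form $D_{(i^{(K-1)},i^{(K)})}$ rather than as a cumulative quantity over the whole chain $\nu_{i^{(i)}} \to \nu_{i^{(i+1)}} \to \cdots \to \nu_{i^{(K)}}$. Here I would appeal to the decomposition established by Definition 2 and Theorem 1: the marginal $p(\tilde{\bf x}^K)$ already encodes the statistical correlations accumulated across the earlier generator updates, so the discrepancy introduced by the intermediate links is absorbed into the Wasserstein term $W(\nu_i,\nu_{i^{(K)}})$, leaving only the last update step to contribute to the combined error. With this identification in hand, rearranging the four task-indexed terms yields the claimed bound verbatim.
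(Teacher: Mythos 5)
Your overall strategy --- instantiate Theorem 2 once per task with the real empirical measure $\nu_i$ as target and the replayed measure $\nu_{i^{(K)}}$ as source, sum over $i=1,\dots,K$, and take a union bound so the statement holds simultaneously (replacing $\delta$ by $\delta/K$) --- is the natural reading of ``From Theorem 2 we have \dots'' and matches what the lemma's structure demands; the union-bound remark is in fact more careful than the statement itself, which keeps a single $\delta$. The summation step and the transfer of the sample-size hypothesis are fine.

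The genuine gap is in your last paragraph, the treatment of the combined-error term. A direct application of Theorem 2 to the pair $(\nu_i,\nu_{i^{(K)}})$ produces the combined error of the optimal joint hypothesis for \emph{that} pair, i.e. $\arg\min_{h\in\mathcal{H}}\bigl(E(h(\nu_i))+E(h(\nu_{i^{(K)}}))\bigr)$, whereas the lemma writes $D_{(i^{(K-1)},i^{(K)})}$, the combined error for the pair of two consecutive \emph{generated} measures. Your justification --- that Definition 2 and Theorem 1 let the intermediate links be ``absorbed into'' $W(\nu_i,\nu_{i^{(K)}})$, leaving only the last update to contribute --- does not establish an inequality. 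Theorem 1 is a purely structural marginalization identity with no quantitative content, and the combined error is an infimum of a sum of risks over hypotheses; there is no mechanism by which replacing the pair $(\nu_i,\nu_{i^{(K)}})$ with $(\nu_{i^{(K-1)}},\nu_{i^{(K)}})$ in that infimum is dominated by a Wasserstein distance between different measures. Either you should read $D_{(i^{(K-1)},i^{(K)})}$ as (somewhat abusive) notation for the combined error delivered directly by Theorem 2 for the $i$-th pair --- in which case the lemma follows by pure substitution and summation and your third paragraph is unnecessary --- or, if the subscript is taken literally, you need an explicit chaining argument through $\nu_{i^{(i)}}\to\cdots\to\nu_{i^{(K)}}$ relating the combined errors of successive pairs, which would introduce additional Wasserstein and risk terms and change the form of the bound. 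As written, the step that converts one combined error into the other is asserted, not proved.
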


\begin{theorem}
By having a learning system, acquiring the information from the given databases, we can define $\log p_{\theta}({\bf x}^{t},\tilde{\bf x}^{t-1})$ as the joint model log-likelihood and $p_{\theta}({\bf z}^{t}, 
{\bf z}^{t-1}|{\bf x}^{t},\tilde{\bf x}^{t-1})$ as the posterior. $\log p_{\theta}({\bf x}^{t},\tilde{\bf x}^{t-1})$ can be optimized by maximizing a lower bound.
\end{theorem}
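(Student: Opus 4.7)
The plan is to derive a variational evidence lower bound on $\log p_{\theta}({\bf x}^{t},\tilde{\bf x}^{t-1})$ by mimicking the standard VAE derivation (as in Eq.~(\ref{L_VAE})) but on the augmented ``current data + generative replay'' pair, and then showing that the bound decomposes cleanly into per-source ELBOs that can be optimized jointly.

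First, I would write the marginal likelihood of the pair as an integral over the joint latent space,
\begin{equation*}
p_{\theta}({\bf x}^{t},\tilde{\bf x}^{t-1}) = \iint p_{\theta}({\bf x}^{t},\tilde{\bf x}^{t-1}\,|\,{\bf z}^{t},{\bf z}^{t-1})\, p({\bf z}^{t},{\bf z}^{t-1})\, d{\bf z}^{t}\, d{\bf z}^{t-1},
\end{equation*}
and invoke the natural conditional-independence structure of the GRM: the current sample is generated from ${\bf z}^{t}$ while the replayed sample is generated from ${\bf z}^{t-1}$ through the previous generator $G_{\gamma_{t-1}}$, so $p_{\theta}({\bf x}^{t},\tilde{\bf x}^{t-1}\,|\,{\bf z}^{t},{\bf z}^{t-1}) = p_{\theta}({\bf x}^{t}\,|\,{\bf z}^{t})\, p_{\theta}(\tilde{\bf x}^{t-1}\,|\,{\bf z}^{t-1})$, and analogously the prior factorizes.

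Second, I would introduce a variational posterior $q_{\varsigma}({\bf z}^{t},{\bf z}^{t-1}\,|\,{\bf x}^{t},\tilde{\bf x}^{t-1}) = q_{\varsigma}({\bf z}^{t}\,|\,{\bf x}^{t})\, q_{\varsigma}({\bf z}^{t-1}\,|\,\tilde{\bf x}^{t-1})$ parameterized by the same inference network $f_{\varsigma}$ applied to both inputs; multiplying and dividing by this $q$ and applying Jensen's inequality yields
\begin{equation*}
\log p_{\theta}({\bf x}^{t},\tilde{\bf x}^{t-1}) \ge \mathcal{L}({\bf x}^{t}) + \mathcal{L}(\tilde{\bf x}^{t-1}),
\end{equation*}
where each $\mathcal{L}(\cdot)$ is a standard ELBO of the form $\mathbb{E}_{q_{\varsigma}({\bf z}|\cdot)}[\log p_{\theta}(\cdot|{\bf z})] - D_{KL}[q_{\varsigma}({\bf z}|\cdot)\,\|\,p({\bf z})]$, identical in shape to the relevant terms of Eq.~(\ref{L_VAE}). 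Extending the latent set to include the discrete and domain variables $({\bf c},{\bf a})$ follows the same template and simply appends the corresponding KL terms already written in (\ref{L_VAE}).

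The main obstacle will be two-fold. First, justifying parameter sharing of the encoder across the real and replayed branches: without it the bound decouples into two independent objectives and no cross-task regularization is obtained, so I would argue for the tied-encoder variational family as the minimal structure that makes the bound informative. Second, the replay distribution itself is a moving target produced by $G_{\gamma_{t-1}}$ rather than the true $p({\bf x}^{t-1})$; I would close this gap by combining the bound with Lemma~1 and Theorem~2, noting that when the previous task was well-approximated, the Wasserstein gap between $p(\tilde{\bf x}^{t-1})$ and $p({\bf x}^{t-1})$ is small, so that maximizing the derived lower bound remains a consistent surrogate for maximizing $\log p_{\theta}({\bf x}^{t},{\bf x}^{t-1})$ up to a controllable error term.
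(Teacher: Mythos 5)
Your proposal follows essentially the same route as the paper's proof: introduce independent latent variables ${\bf z}^t$ and ${\bf z}^{t-1}$ for the real and replayed samples, factorize the joint latent-variable model, and lower-bound $\log p_{\theta}({\bf x}^{t},\tilde{\bf x}^{t-1})$ by the sum of two standard per-source ELBOs under a shared (tied) inference network. If anything, you are more explicit than the paper, which leaves the conditional factorization $p_{\theta}({\bf x}^{t},\tilde{\bf x}^{t-1}|{\bf z}^{t},{\bf z}^{t-1}) = p_{\theta}({\bf x}^{t}|{\bf z}^{t})\,p_{\theta}(\tilde{\bf x}^{t-1}|{\bf z}^{t-1})$ and the Jensen step implicit, and your closing remarks about the replay distribution being a surrogate target correspond to what the paper defers to Lemma 3.
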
  

\begin{proof}
In this case, we consider two underlying generative factors (latent variables) ${\bf z}^{t}$, ${\bf z}^{t-1}$ for observing the real data ${\bf x}^t$ and generated data $\tilde{\bf x}^{t-1}$, respectively. We then define the latent variable model $p_{\theta}({\bf x}^t,\tilde{\bf x}^{t-1},{\bf z}^t,{\bf z}^{t-1}) = p_{\theta}({\bf x}^t,\tilde{\bf x}^t|{\bf z}^t,{\bf z}^{t-1})p({\bf z}^{t})p({\bf z}^{t-1})$ and its marginal log-likelihood is approximated by a lower bound~:
\begin{equation}
\begin{aligned}
\log p_{\theta}({\bf x}^{t},\tilde{\bf x}^{t-1}) &
\ge \mathbb{E}_{q_{\xi}({\bf z}^{t}|{\bf x}^{t})} \left[ \log \frac{p_{\theta}({\bf x}^{t},{\bf z}^{t})}{q_{\xi}({\bf z}^{t}|{\bf x}^{t })} \right]  + \mathbb{E}_{q_{\xi}({\bf z}^{t-1}|\tilde{\bf x}^{t-1})}\left[ \log \frac{p_{\theta}(\tilde{\bf x}^{t-1},
{\bf z}^{t-1})}{q_{\xi}({\bf z}^{t-1}| \tilde{\bf x}^{t-1})} \right]
\end{aligned}
\end{equation}
We define the above equation as ${\mathcal L}(\theta ,\xi ;{\bf x}^{t},\tilde{\bf x}^{t-1})$. 
\end{proof}

\begin{lemma}
From the Theorems 2 and 3, we can derive a lower bound on the sample log-likelihood at t-th task learning, as expressed by:

\begin{equation}
\begin{aligned}
\log {p_\theta }({{\bf{x}}^1},..,{{\bf{x}}^t}) &\ge
\mathcal{L}(\theta ,\xi ;{{\bf{x}}^1},..,{{\bf{x}}^t}) \ge \mathcal{L}(\theta ,\xi ;{{\bf{x}}^{t },{\tilde {\bf{x}}^{t-1}})} \\&- {\rm{\bf W}(}v{\rm{,}}v'{\rm{)}} - \sqrt {2\log \left( {\frac{1}{\delta }} \right)/a'} \left( {\sqrt {\frac{1}{n}}  + \sqrt {\frac{1}{{n'}}} } \right) - D^*
\end{aligned}
\end{equation}
where $\nu  \in {\bf R}^s,\nu' \in {\bf R}^s$ are formed by
${n}$ and ${n'}$ numbers of samples drawn from $p({{\bf{x}}^t})p({{{\bf{\tilde x}}}^{t-1}})$ and $\prod\nolimits_i^t {p({{\bf{x}}^i})} $, respectively, where $n$ and $n'$ denote the sample size. 
\end{lemma}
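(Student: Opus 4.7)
My plan is to chain two inequalities together. The outer bound $\log p_\theta({\bf x}^1,\ldots,{\bf x}^t) \ge \mathcal{L}(\theta,\xi;{\bf x}^1,\ldots,{\bf x}^t)$ is just the joint-model ELBO; I would obtain it by introducing an auxiliary variational posterior $q_\xi({\bf z}^{1:t}|{\bf x}^{1:t})$ over the latents of all $t$ tasks and invoking Jensen's inequality on $\log \int p_\theta({\bf x}^{1:t},{\bf z}^{1:t})\, d{\bf z}^{1:t}$. This is exactly the construction carried out in Theorem 3 but extended from a two-factor joint $({\bf x}^t,\tilde{\bf x}^{t-1})$ to a genuine $t$-factor joint over true task samples.

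The second, substantive inequality links the joint ELBO to the replay-based ELBO $\mathcal{L}(\theta,\xi;{\bf x}^t,\tilde{\bf x}^{t-1})$ from Theorem 3. My idea is to view the negated ELBO, as a function of the input data sample, as the observed risk $E(h^t(\cdot))$ of a learnt model $h^t$ in the sense of Theorem 2, with the generator/inference pair playing the role of $h^t$. I would then instantiate Theorem 2 with the source measure $\nu'$ formed by $n'$ samples drawn from the true joint $\prod_{i=1}^t p({\bf x}^i)$ and the target measure $\nu$ formed by $n$ samples drawn from $p({\bf x}^t)p(\tilde{\bf x}^{t-1})$. Theorem 2 then delivers
\begin{equation}
-\mathcal{L}(\theta,\xi;{\bf x}^t,\tilde{\bf x}^{t-1}) \le -\mathcal{L}(\theta,\xi;{\bf x}^1,\ldots,{\bf x}^t) + \mathbf{W}(\nu,\nu') + \sqrt{2\log(1/\delta)/a'}\Bigl(\sqrt{1/n}+\sqrt{1/n'}\Bigr) + D^*,
\end{equation}
where $D^*$ inherits the role of the combined optimal-model error $D$ from Theorem 2. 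Multiplying by $-1$, rearranging, and composing with the ELBO bound from the previous paragraph yields the stated chain of inequalities.

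The hard part will be justifying rigorously that Theorem 2 can be applied to the negative ELBO as if it were a discriminative risk. The Wasserstein bound in Theorem 2 implicitly requires the risk functional to be Lipschitz in the data argument, so that $W_1$ closeness of empirical measures actually controls the gap in observed risks. Establishing this for $\mathcal{L}(\theta,\xi;\cdot)$ means checking that $\log p_\theta({\bf x}|{\bf z},{\bf a},{\bf c})$ and the encoder moments $\mu_\varsigma,\sigma_\varsigma$ are sufficiently regular, which is plausible for bounded, smooth decoder/encoder networks but is not automatic. Once this regularity is in place, the rest of the argument reduces to bookkeeping of the sample sizes $n,n'$, the confidence level $\delta$, and the constants from Theorem 2, with no additional technical content.
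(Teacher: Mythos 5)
Your overall route is the intended one: the lemma's bound is exactly Theorem 2's right-hand side grafted onto the ELBO machinery of Theorem 3, so chaining the joint-model ELBO (Jensen's inequality with a variational posterior over all $t$ latents) with a risk-transfer step in which the negative ELBO plays the role of the observed risk $E(h^t(\cdot))$ is precisely the derivation the lemma is pointing at. You are also right, and commendably explicit, that the substantive obligation is the regularity of $-\mathcal{L}(\theta,\xi;\cdot)$ as a loss: the Wasserstein-based bound of Theorem 2 (inherited from the optimal-transport domain-adaptation result it builds on) only controls the risk gap if the loss is Lipschitz-type in the data argument, which must be checked for the decoder log-likelihood and encoder moments. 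The paper does not discharge this obligation either, so flagging it rather than treating it as automatic is the right call.

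There is, however, a direction error in your instantiation of Theorem 2 that makes the final step fail as written. You take the source (training) measure to be $\nu'$, drawn from the true joint $\prod_{i=1}^{t} p({\bf x}^i)$, and the target to be $\nu$, drawn from $p({\bf x}^t)p(\tilde{\bf x}^{t-1})$, which gives $-\mathcal{L}(\theta,\xi;{\bf x}^t,\tilde{\bf x}^{t-1}) \le -\mathcal{L}(\theta,\xi;{\bf x}^1,\ldots,{\bf x}^t) + C$ with $C = {\bf W}(\nu,\nu') + \sqrt{2\log(1/\delta)/a'}\,(\sqrt{1/n}+\sqrt{1/n'}) + D^*$. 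Rearranging this yields $\mathcal{L}(\theta,\xi;{\bf x}^1,\ldots,{\bf x}^t) \le \mathcal{L}(\theta,\xi;{\bf x}^t,\tilde{\bf x}^{t-1}) + C$, an \emph{upper} bound, whereas the lemma asserts the \emph{lower} bound $\mathcal{L}(\theta,\xi;{\bf x}^1,\ldots,{\bf x}^t) \ge \mathcal{L}(\theta,\xi;{\bf x}^t,\tilde{\bf x}^{t-1}) - C$. The roles must be swapped: in the ``dreaming'' phase the model is actually trained on ${\bf x}^t$ together with the replayed $\tilde{\bf x}^{t-1}$, so $p({\bf x}^t)p(\tilde{\bf x}^{t-1})$ is the source (the $\nu_{t'}$ of Theorem 2, whose risk appears on the right of the inequality) and the inaccessible true joint is the target (the $\nu_t$, whose risk appears on the left). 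With that assignment Theorem 2 gives $-\mathcal{L}(\theta,\xi;{\bf x}^1,\ldots,{\bf x}^t) \le -\mathcal{L}(\theta,\xi;{\bf x}^t,\tilde{\bf x}^{t-1}) + C$, which negates to the stated chain. The fix is local, but as written your second step proves the reverse inequality.
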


Lemma 2 provides an explicit way to investigate how information is lost through GRMs during the lifelong learning process. Meanwhile, Lemma 3 derives the evidence lower bound (ELBO) of the sample log-likelihood. All GRM approaches, based on VAE and GAN architectures, can be explained through this theoretical analysis. However, GAN based approaches lack inference mechanisms, such as the one provided in Lemma 3. This motivates us to develop a new lifelong learning approach utilizing the advantages of both GANs and VAEs, enabling the generation of data and the log-likelihood estimation abilities.

\vspace{-5pt}
\section{The two-step latent variables optimization over time}
\label{GenReplay}
\vspace{-5pt}

In the following, inspired by the theoretical analysis from Section~\ref{TheoGenReplay}, we introduce a two-step optimization algorithm which besides training a powerful generative replay network it also induces latent representations. Our algorithm is different from existing hybrid models which would only train generator and inference models within a single optimization function \cite{HybridGAN} or learn an optimal coupling between generator and inference model by using adversarial learning \cite{SymmetricVAE,AFL,AdLearnInf,HybridGAN,Alice,AAE,AdbVB,Veegan}. The proposed algorithm contains two independent optimization paths, namely ``wake''  and ``dreaming'' phases. In the following we explain the application of the proposed Lifelong VAEGAN in supervised, semi-supervised and unsupervised learning.

\vspace{-10pt}
\subsection{Supervised learning}

In the ``wake'' phase, by considering the Definitions 1 and 2, the refined distribution $p(\tilde{\bf x}^t)$ is trained to approximate $p(\tilde{\bf x}^{t-1},{\bf x}^t)$ by minimizing the Wasserstein distance~: 
\begin{equation}
\begin{aligned}
& \mathop {\min }\limits_G \mathop {\max }\limits_D {\mathcal L}_{GAN}^G (\theta_t ,\omega_t) \buildrel \Delta \over = \mathbb{E}_{p ({\bf z}),
p ( {\bf c}),p ({\bf a})} 
[ D ( G ( {\bf c},{\bf z},{\bf a})) ] - \mathbb{E}_{p( \tilde{\bf x}^{t-1})
p({\bf x}^t)} [D({\bf x}) ],
\end{aligned}
\end{equation}
where we omit the penalty term, weighted by $\lambda$, for the sake of simplification.

In the ``dreaming'' phase, we maximize the sample log-likelihood on the joint distribution of the generated data and the empirical data, associated with a given new task, by maximizing the ELBO~:
\begin{equation}
\begin{aligned}
\mathcal {L}_{VAE}(\theta_t ,\varsigma_t ,\varepsilon_t ,\delta_t) &\buildrel \Delta \over = \mathbb{E}_{q_{\varsigma,\varepsilon,\delta}({\bf z},{\bf a},{\bf c}|{\bf x}^t)} 
\left[ \log \frac{p_\theta({\bf x}^t| {\bf z},{\bf a},{\bf c})}{q_{\varsigma ,\varepsilon ,\delta } ({\bf z},{\bf a},{\bf c}|{\bf x}^t)} \right] \\
& + \mathbb{E}_{q_{\varsigma ,\varepsilon ,\delta }({\bf z},{\bf a},{\bf c}|\tilde{\bf x}^{t - 1})} 
\left[ \log \frac{p_\theta ( \tilde{\bf x}^{t-1}|{\bf z},{\bf a},{\bf c})}{q_{\varsigma ,\varepsilon ,\delta }({\bf z},{\bf a},{\bf c}|\tilde{\bf x}^{t-1})} \right].
\end{aligned}
\end{equation}

This loss function is used to train both Generator and Inference models. After training, the Inference model $q_\delta ({\bf c}|{\bf x})$ can be used for classification.

\subsection{Semi-supervised learning}
\label{Semisuperv}

Let us consider that we only have a small subset of labelled data from each database, while the rest of data is unlabelled. For the labelled data, we derive the objective function without the inference model $q_\delta ({\bf c}|{\bf x})$ in the ``dreaming'' phase as~:
\begin{equation}
\begin{aligned}
& \mathcal {L}_{VAE}^S (\theta_t ,\varsigma_t ,\varepsilon_t ,\delta_t) \buildrel \Delta \over = \ \sum\nolimits_{}^2 \mathbb{E}_{q_\varsigma ( {\bf z}|{\bf x}), q_\varepsilon ({\bf a}|{\bf x}),p({\bf y})} [ \log p_\theta ( {\bf x}|{\bf z},{\bf a},{\bf y} ) ]\\
& - {D_{KL}}[{q_\varsigma }({\bf{z}}|{\bf{x}})||p({\bf{z}})] - {\mathbb{E}_{{q_\varsigma }({\bf{z}}|{\bf{x}})}}{D_{KL}}[{q_\varepsilon }({\bf{a}}|{\bf{z}})||p({\bf{a}}|{\bf{z}})] - {D_{KL}}[{q_\delta }({\bf{c}}|{\bf{x}})||p({\bf{c}})],
\end{aligned}
\end{equation}
where $\sum^2$ denotes the estimation of ELBO on the joint model log-likelihood. In addition, we model the unlabeled data samples by using 
$\mathcal L_{VAE} ( \theta_t ,\varsigma_t ,\varepsilon_t ,\delta_t )$, where the discrete variable ${\bf c}$ is sampled from the Gumbel-softmax distribution whose probability vector is obtained by the encoder $q_\delta ( {\bf c}|{\bf x})$. The full semi-supervised loss used to train the hybrid model is defined as:
\begin{equation}
\begin{aligned}
{\mathcal L}_{VAE}^{Semi} \buildrel \Delta \over = {\mathcal L}_{VAE}^S + \beta {\mathcal L}_{VAE}^{},
\end{aligned}
\end{equation}
where $\beta$ is used to control the importance of the unsupervised learning when compared with the component associated with supervised learning. In addition, the entropy loss ${\mathcal L}_c ( \delta )$ is also performed with the labeled data samples in order to enhance the prediction ability of $q_\delta ( {\bf c}|{\bf x})$.

\subsection{Unsupervised learning}
\label{describe_unsupervised}

We also employ the proposed hybrid VAE-GAN model for the lifelong unsupervised learning setting, where we do not have the class labels for each task. Similarly to the supervised learning framework, we minimize the Wasserstein distance between $p(\tilde{\bf x}^t)$ and $p(\tilde{\bf x}^{t-1},{\bf x}^t)$~:
\begin{equation}
\begin{aligned}
& \mathop {\min }\limits_G \mathop {\max }\limits_D {\mathcal L}_{GAN}^U (\theta _t ,\omega _t) \buildrel \Delta \over = \mathbb{E}_{p ({\bf z}),p ( {\bf a} )} [ D (G ({\bf z},{\bf a}))] - \mathbb{E}_{p( \tilde{\bf x}^{t-1}) p({\bf x}^t)} [D({\bf x})] .
\end{aligned}
\label{L_GAN_U}
\end{equation}

In the ``dreaming'' phase, we train the generator and inference models using:
\begin{equation}
\begin{aligned}
{\mathcal L}_{VAE}^U ( \theta_t ,\varsigma_t ,\varepsilon_t  ) \buildrel \Delta \over = &
\sum\nolimits_{}^2 \mathbb{E}_{ q_\varsigma ( {\bf z}|{\bf x}), q_\varepsilon ({\bf a}|{\bf x})} [ \log p_\theta ({\bf x}|{\bf z},{\bf a} ) ]\\
&- {D_{KL}}[{q_\varsigma }({\bf{z}}|{\bf{x}})||p({\bf{z}})] - {\mathbb{E}_{{q_\varsigma }({\bf{z}}|{\bf{x}})}}{D_{KL}}[{q_\varepsilon }({\bf{a}}|{\bf{z}})||p({\bf{a}}|{\bf{z}})],
\end{aligned}
\label{L_VAE_U}
\end{equation}
where the Generator is conditioned on only two variables, ${\bf z}$ and ${\bf a}$. For learning disentangled representations, we employ the Minimum Description Length (MDL) principle \cite{UnVAE,MDLVAEL} and replace the second term from equation (\ref{L_VAE_U}) by $\gamma |D_{KL} [ q_\varsigma ({\bf z}|{\bf x})||p ( {\bf z}) ] - C|$,  where $\gamma$ and $C$ are a multiplicative and a linear constant used for controling the degree of disentanglement. 

\vspace{-5pt}
\section{Experimental results}
\label{Exp}

\subsection{Lifelong unsupervised learning}
\label{UnsupervisedLearning_section}
In this section, we investigate how the proposed Lifelong VAEGAN (L-VAEGAN) model learns meaningful and interpretable representations in images from various databases under the unsupervised lifelong learning setting. 

\noindent \textbf{Reconstruction and Interpolation results following lilfeong learning.} We train the L-VAEGAN model using the loss functions  ${\mathcal L}_{GAN}^U$ and $\mathcal{L}_{VAE}^U$  from equations (\ref{L_GAN_U}) and (\ref{L_VAE_U}), which contain  adversarial and VAE learning terms, respectively, and we consider a learning rate of 0.001. The generation and reconstruction results are presented in Figs.~\ref{ReconInterp}a-c, for the lifelong learning of CelebA \cite{Celeba} to CACD \cite{CACD}, and in Figs.~\ref{ReconInterp}d-f for CelebA to 3DChair \cite{ThreeDChairs}. From these results, the inference model works well for both discrete and continuous latent variables. 

\begin{figure}[h]
	\centering
		\setlength{\abovecaptionskip}{0pt}   
    \setlength{\belowcaptionskip}{-15pt}
	\subfigure[Real Images.]{
		\centering
		\includegraphics[scale=0.33]{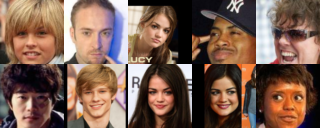}
	}
	\subfigure[Generated Images.]{
		\centering
		\includegraphics[scale=0.33]{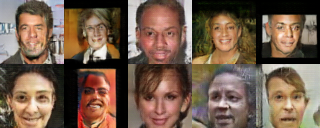}
	}
	\subfigure[Reconstructions.]{
		\centering
		\includegraphics[scale=0.33]{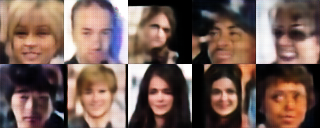}
	} \\
	\subfigure[Real Images.]{
		\centering
		\includegraphics[scale=0.33]{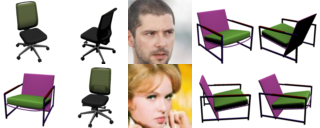}
	}
	\subfigure[Generated Images.]{
		\centering
		\includegraphics[scale=0.33]{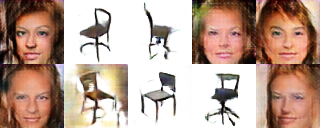}
	}
	\subfigure[Reconstructions.]{
		\centering
		\includegraphics[scale=0.33]{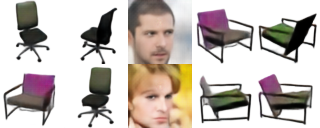}
	}
	\caption{Image reconstructions and generated after the CelebA to CACD (top row) and CelebA to 3DChair (bottom row) lifelong learning.}
	\label{ReconInterp}
\end{figure}

In the following we perform data interpolation experiments under the lifelong learning setting in order to evaluate the manifold continuity. We call lifelong interpolation when the interpolation is performed between multiple domains, by considering data from different databases, under the lifelong learning setting. We randomly select two images and then infer their discrete ${\bf a}$ and continuous ${\bf z}$ latent variables by using the inference model. The interpolation results are shown in Figs.~\ref{Intertable}-a and \ref{Intertable}-b, for CelebA to CACD and CelebA to 3D-chair lifelong learning, respectively. The first two rows show the interpolation results in images from the same database, while the last two rows show the interpolations of images from two different databases. From the images from the last two rows of Fig.~\ref{Intertable}-b we observe that a chair is smoothly transformed into a human face, where its seat and backside are smoothly changed into the eyes and hair of a person. This shows that the L-VAEGAN model can learn the joint latent space of two completely different data configurations.  

\noindent \textbf{Lifelong Disentangled Representations.} We train the L-VAEGAN model under the CelebA to 3D-Chairs lifelong learning by adapting the loss functions from (\ref{L_GAN_U}) and (\ref{L_VAE_U}) in order to achieve unsupervised disentangled representations, as mentioned in Section~\ref{describe_unsupervised}. We consider the multiplicative parameter $\gamma=4$, while increasing the linear one $C$ from 0.5 to 25.0 during the training. After the training, we change one dimension of a continuous latent representation ${\bf z}$, inferred by using the inference model for a given input and then recover it back in the visual data space by using the generator. The disentangled results are presented in Fig~\ref{UnsupervisedDisentangled} which indicates changes in the skin, gender, narrowing of the face, chair size, face pose and chairs' style. These results show that the L-VAEGAN hybrid model can discover different disentangled representations in both CelebA and 3D chair databases.

\begin{figure}[h]
	\centering
		\setlength{\abovecaptionskip}{0pt}   
    \setlength{\belowcaptionskip}{-12pt}  
	\subfigure[CelebA to CACD learning.]{
		\centering
		\includegraphics[scale=0.425]{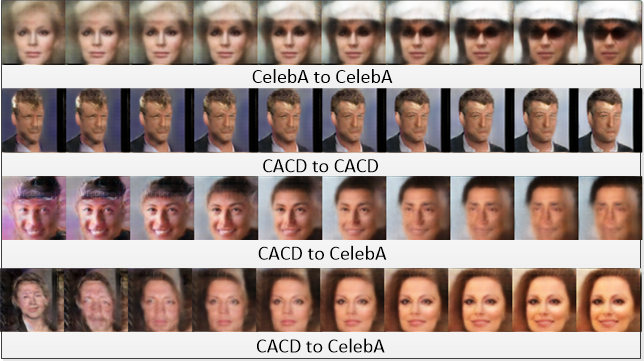}
	}
	\subfigure[CelebA to 3D-chair learning.]{
		\centering
		\includegraphics[scale=0.425]{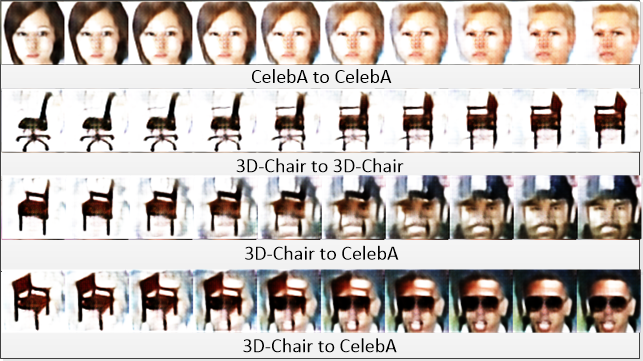}
	}
\centering
	\caption{Interpolation results after lifelong learning. }
	\label{Intertable}
\end{figure}

\begin{figure*}[htbp]
	\centering
		\setlength{\abovecaptionskip}{0pt}   
    \setlength{\belowcaptionskip}{0pt}  
	\includegraphics[scale=0.267]{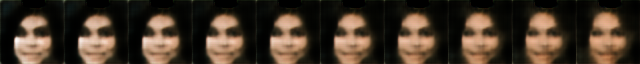}
		\includegraphics[scale=0.267]{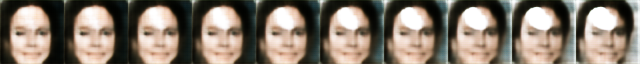}
	\includegraphics[scale=0.267]{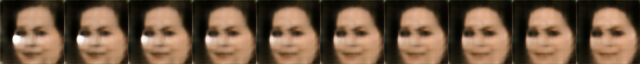}
		\includegraphics[scale=0.267]{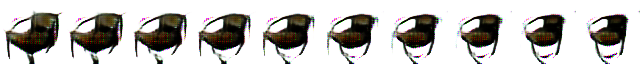}
	\includegraphics[scale=0.267]{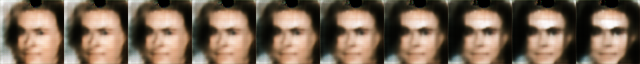}
	\includegraphics[scale=0.267]{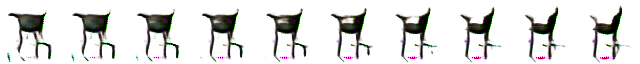}
	\centering
	\caption{Manipulating latent variables on CelebA and 3D-chair datasets, under the CelebA to 3D-chairs lifelong learning. We change a single latent variable from -3.0 to 3.0 while fixing all the others. From left to right and top to bottom, we can see changes in skin, gender, narrowing of the face, chair size, face pose and the style of chairs.    }
	\label{UnsupervisedDisentangled}
\end{figure*}

\makeatletter\def\@captype{table}\makeatother
\begin{minipage}{.5\textwidth}	
\caption{Quantitative evaluation of the representation learning ability.}
\begin{tabular}{lcccc}
		\toprule
		\multicolumn{5}{c}{The lifelong learning of MNIST and Fashion}                   \\
		\cmidrule(r){1-5}
		Methods   & Lifelong &Dataset &Rec&Acc \\
		\midrule
		L-VAEGAN & M-F &MNIST & \textBF{4.75}& \textBF{92.53} \\
		LGM \cite{GenerativeLifelong} & M-F &MNIST & 7.18& 91.26 \\
		VAEGAN \cite{AdbVB} & M-F &MNIST & 6.54& 91.87\\
		
				\cmidrule(r){1-5}

		L-VAEGAN & M-F &Fashion & \textBF{17.44}& \textBF{67.66} \\
		LGM \cite{GenerativeLifelong} & M-F &Fashion & 18.33& 66.17 \\
		VAEGAN \cite{AdbVB} & M-F &Fashion & 17.03& 67.23
		\\
				\cmidrule(r){1-5}

			L-VAEGAN & F-M &MNIST & \textBF{4.92}& \textBF{93.29} \\
		LGM \cite{GenerativeLifelong} & F-M &MNIST & 7.18& 91.26 \\
		VAEGAN \cite{AdbVB} & F-M &MNIST & 5.52& 92.16\\
				\cmidrule(r){1-5}
		
				L-VAEGAN & F-M &Fashion & \textBF{13.16}& \textBF{66.97} \\
		LGM \cite{GenerativeLifelong} & F-M &Fashion & 18.83& 62.53 \\
		VAEGAN \cite{AdbVB} & F-M &Fashion & 16.57& 64.98\\
		\bottomrule
	\end{tabular}
	\label{tab10}
\end{minipage} \;\;\;\;\;\;\;\;\;\;
\makeatletter\def\@captype{figure}\makeatother
\begin{minipage}{.40\textwidth}
\centering
\subfigure[IS evaluation.]{
		\centering
		\includegraphics[scale=0.28]{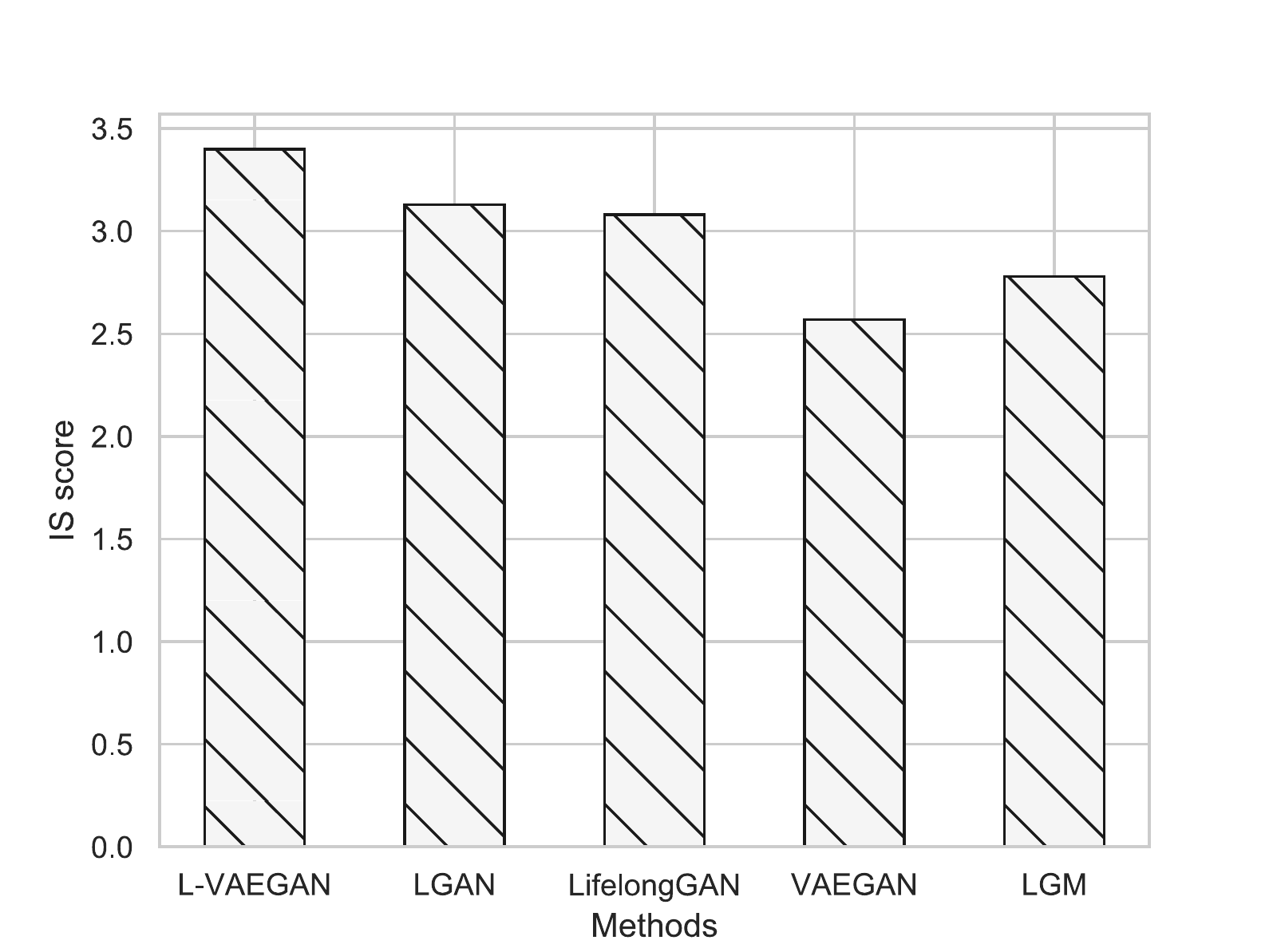}
	}
	\subfigure[FID evaluation.]{
		\centering
		\includegraphics[scale=0.28]{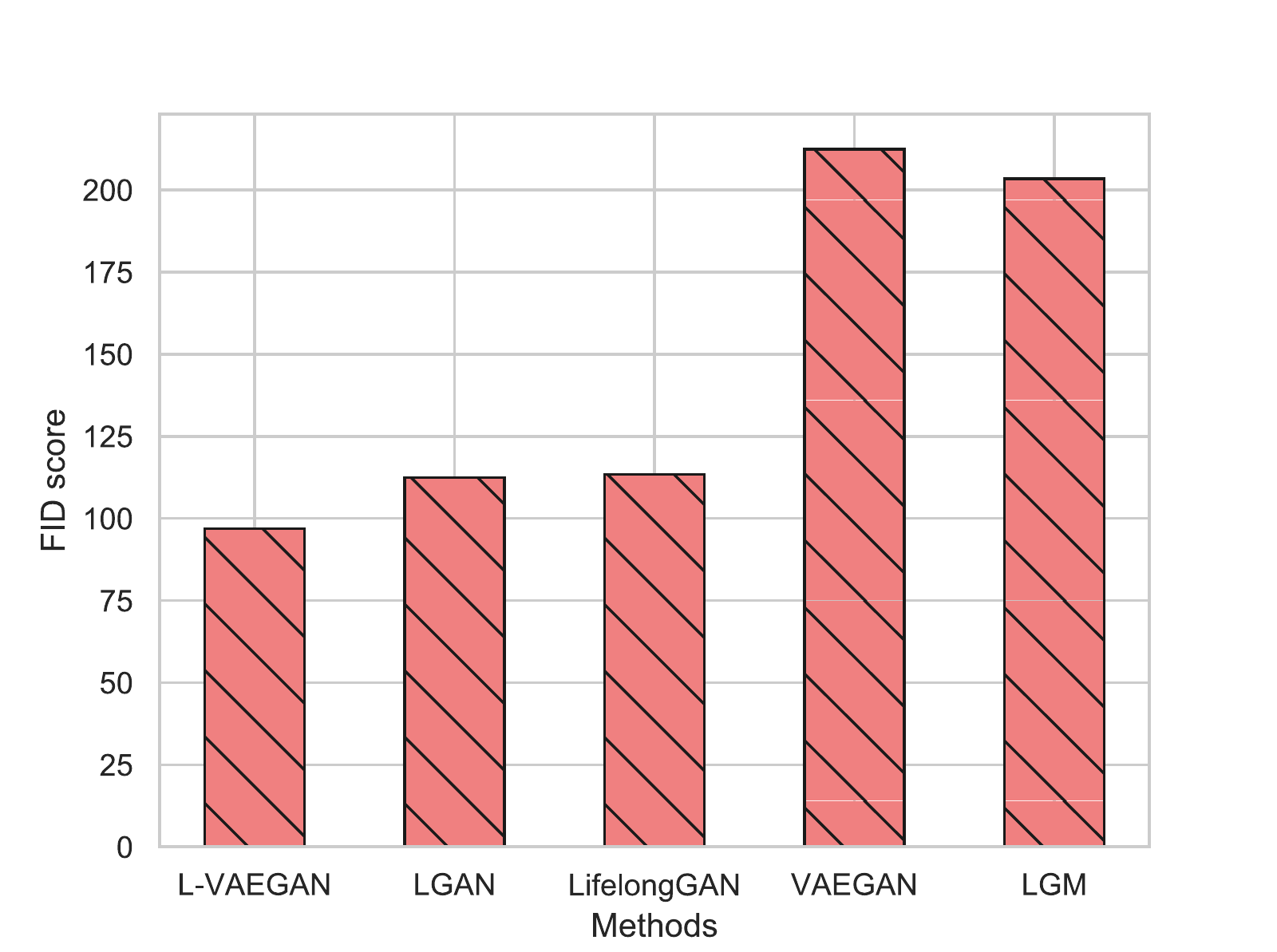}
	}
	\caption{Quality evaluation.}
	\label{inception}
\end{minipage}

\noindent \textbf{Quantitative Evaluation.} In the following we evaluate numerically the image representation ability for the proposed approach. We consider the reconstruction (Rec) as MSE, and classification accuracy (Acc), as in \cite{LGAN_conditional}. The classification accuracy is calculated by a classifier which was trained on the generated images and evaluated on unseen testing samples. We provide the results in Table~\ref{tab10}, where L-VAEGAN learns firstly the database MNIST followed by Fashion, denoted as
M-F, while F-M represents the learning of the same databases in reversed order. For comparison we consider LGM \cite{GenerativeLifelong} and VAEGAN \cite{AdbVB}, which is one of the best known hybrid models enabled with an inference mechanism. We implement VAEGAN using GRM in order to prevent forgetting. From Table~\ref{tab10} we can see that L-VAEGAN achieves the best results. 

We also use the Inception score (IS) \cite{InceptScore} and Fréchet Inception Distance (FID)  \cite{FID} for evaluating the quality of the generated images. We consider using the IS score for evaluating the generated images when training Cifar10 \cite{CIFAR10} to MNIST lifelong learning and the FID score for CelebA to CACD lifelong learning. Figs.~\ref{inception}-a and \ref{inception}-b show the IS and FID resuls, respectively, where a lower FID and a higher IS indicate better quality images and where we compare with four lifelong learning approaches~: LGAN \cite{Generative_replay}, LifelongGAN \cite{LGAN_conditional}, VAEGAN \cite{AdbVB} and LGM \cite{GenerativeLifelong}. LifelongGAN \cite{LGAN_conditional} requires to load the previously learnt real data samples in order to prevent forgetting them when it is applied in general generation tasks (no conditional images are available). The results indicate that GAN-based lifelong approaches achieve better scores than VAE based methods because VAEs usually generate blurred images. The proposed approach not only produces higher-quality generative replay images but also learns representations of data that other GAN-based lifelong approaches can not. When compared with the VAE-based methods, the proposed approach yields higher-quality reconstructed and generated images. 

\subsection{Lifelong supervised learning}

We compare L-VAEGAN with various other methods under the lifelong supervised setting. LGAN \cite{Generative_replay} typically trains a classifier (called Solver) on both the images generated by the GAN and the training data samples from the current task. We also consider an auxiliary classifier for LGM \cite{GenerativeLifelong} (which cannot be used directly as a classifier) by training it on the mixed dataset consisting of images generated by the LGM and the training samples of the current task.

\begin{table}[h]
	\centering
			\caption{The classification results for the lifelong learning of MNIST and Fashion.}
			\setlength{\abovecaptionskip}{-20pt}   
    \setlength{\belowcaptionskip}{0pt}
			\small
	\begin{tabular}{lccccccc}
		\toprule
		Dataset   & Lifelong  &L-VAEGAN    &LGAN\cite{Generative_replay} &LGM\cite{GenerativeLifelong}&EWC \cite{EWC} &Transfer&MeRGANs \cite{MemoryReplayGAN} \\
		\midrule
		MNIST & M-F &\textBF{98.76} & 98.41& 97.29&37.7 &40.63&98.34 \\
		MNIST & F-M &98.77 & 98.32& 98.85&\textBF{99.12}&98.25&98.27 \\
		Fashion & M-F &\textBF{92.01} & 91.42& 91.71&91.38&91.01&91.12 \\
		Fashion & F-M &\textBF{89.24} & 89.15& 86.05&54.53&37.92&88.86 \\
		\bottomrule
	\end{tabular}
	\label{tab1}
\end{table}

\begin{table}[h]
	\centering
			\caption{The classification results for the lifelong learning of MNIST and SVHN.}
		\setlength{\abovecaptionskip}{-20pt}   
    \setlength{\belowcaptionskip}{0pt}
	\small
	\begin{tabular}{lccccccc}
		\toprule
		Dataset   & Lifelong  &L-VAEGAN    &LGAN\cite{Generative_replay} &LGM\cite{GenerativeLifelong}&EWC \cite{EWC}&Transfer&MeRGANs \cite{MemoryReplayGAN} \\
		\midrule
		MNIST & M-S &\textBF{96.63} & 96.59& 96.59&32.63&4.70&96.46 \\
		MNIST & S-M &98.55 & 98.42& 98.30&\textBF{99.03}&98.45&98.38 \\
		SVHN & M-S &81.97 & 80.77& 80.10&89.71&\textBF{89.90}&79.67 \\
		SVHN & S-M &\textBF{80.99} & 76.76& 80.97&37.69&3.60&80.85 \\
		\bottomrule
	\end{tabular}
	\label{tab2}
\end{table}

We train the L-VAEGAN model under the MNIST to SVHN \cite{SVHN} and MNIST to Fashion lifelong learning tasks, respectively. The classification accuracy of various methods is reported in Tables~\ref{tab1} and \ref{tab2}. We observe that the replay generative images can prevent forgetting and the quality of performing on the previous tasks is depending on the capability of the Generator. For instance, the classification accuracy of previous tasks for LGM is a bit lower than for the other GAN-based methods. The reason for this is that LGM uses VAEs as a generator and therefore can not generate high-quality images when compared to GANs. The classification results demonstrate that the proposed approach can be used in the lifelong supervised learning with a good performance.

\makeatletter\def\@captype{table}\makeatother
{
\centering
\begin{minipage}{.55\textwidth}
		\setlength{\abovecaptionskip}{12pt}   
    \setlength{\belowcaptionskip}{1pt}
		\caption{Semi-supervised classification error results on MNIST database, under the MNIST to Fashion lifelong learning.}
\begin{tabular}{lcc}
		\toprule
		Methods    & Lifelong? & Error  \\
		\midrule
		L-VAEGAN & Yes &4.34 \\
		LGAN \cite{Generative_replay} & Yes &5.46 \\
		Neural networks (NN) \cite{SemiSuper} & No &10.7 \\
		(CNN) \cite{SemiSuper} & No &6.45 \\
		TSVM \cite{SemiSuper} & No &5.38 \\
		CAE \cite{SemiSuper} & No &4.77 \\
		M1+TSVM \cite{SemiSuper} & No &4.24 \\
		M2 \cite{SemiSuper} & No &3.60 \\
		M1+M2 \cite{SemiSuper} & No &2.40 \\
		Semi-VAE \cite{Semi_VAE} & No &2.88 \\
		\bottomrule
	\end{tabular}
	\label{tab3}
\end{minipage}
\makeatletter\def\@captype{figure}\makeatother
\begin{minipage}{.43\textwidth}
\centering
		\includegraphics[scale=0.36]{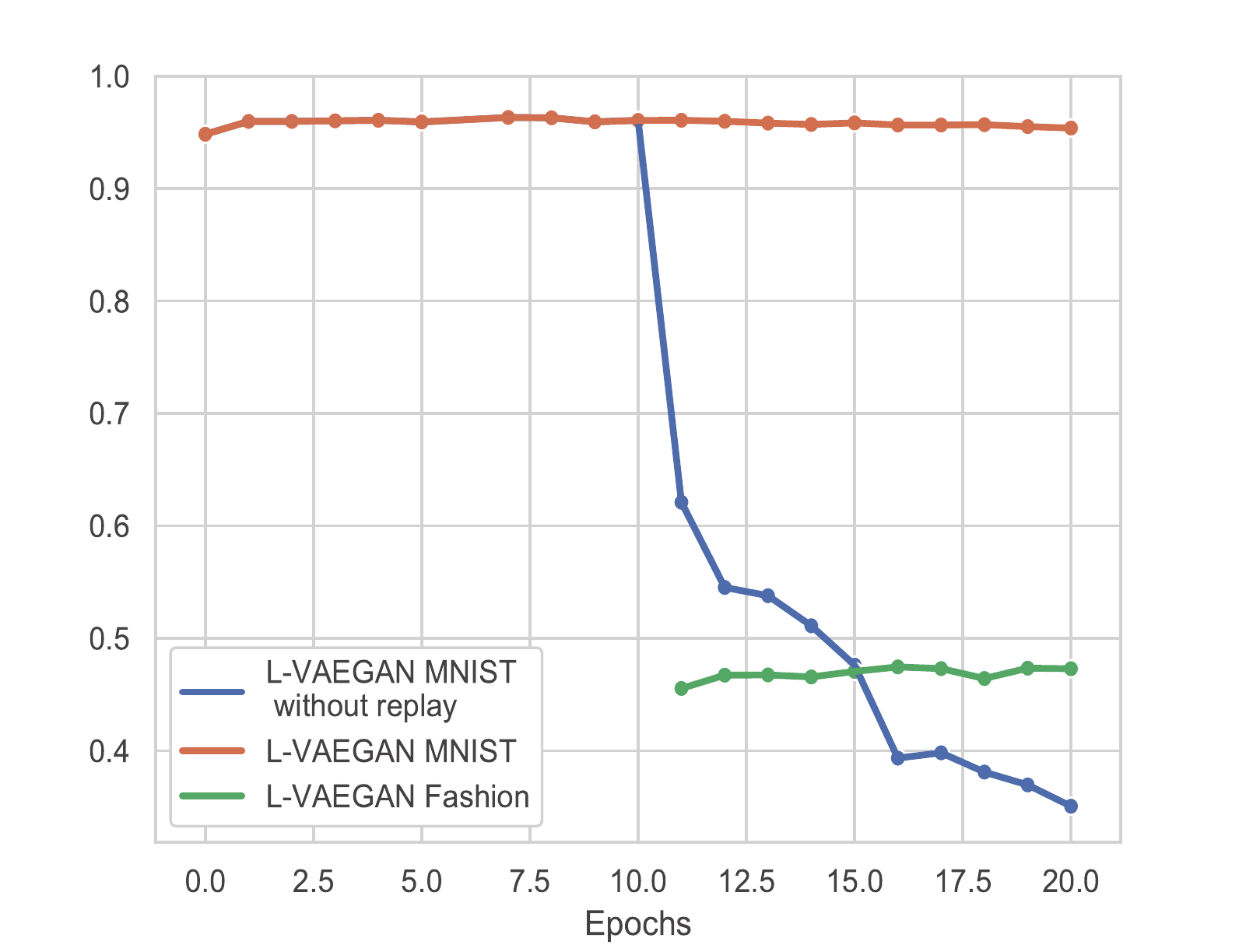}
	\centering
	\caption{The accuracy during the semi-supervised training on the testing data samples during the lifelong learning. The model is trained for 10 epochs for each task. }
	\label{Semi_accuracy}
\end{minipage}
}

\subsection{Lifelong semi-supervised learning}

 The semi-supervised learning framework for L-VAEGAN is presented in Section~\ref{Semisuperv}. During the semi-supervised training we consider only a small number of labelled images from each database.  In the following experiments we divide MNIST and Fashion datasets into two subsets each, representing labelled and unlabeled data. We consider a total of 1,000 and 10,000 labelled images for MNIST and Fashion datasets, respectively, with an equal number of data in each class for the labelled set. We train the proposed L-VAEGAN model with 10 epochs for learning each database and the resulting classification plots are shown in Fig.~\ref{Semi_accuracy}. We observe that without generative replay samples, the model under the semi-supervised setting suffers from catastrophic forgetting.

The classification results for lifelong learning when using L-VAEGAN compared to other semi-supervised learning methods are provided in Table~\ref{tab3}. These results show that the proposed approach outperforms LGAN \cite{Generative_replay}, under the semi-supervised learning setting, and achieves competitive results when compared to the state-of-the art models which are not trained using lifelong learning.

\subsection{Analysis}

The plot of the classification accuracy for all testing samples during the MNIST to Fashion lifelong learning is provided in Fig.~\ref{CAccuracy}-a, where the first 10 training epochs correspond to learning the MNIST task, while the next 10 epochs are used for learning the Fashion database. We observe that the performance of previous tasks is maintained when considering the GRM. However, without replaying data, the model learned from the previous tasks quickly forgets the past knowledge while learning new tasks, as indicated by a significant performance drop, as observed in Fig.~\ref{CAccuracy}-a. Besides, unlike LGAN \cite{Generative_replay}, the proposed L-VAEGAN is able to provide inference mechanisms, benefiting many downstream tasks, as shown in the results from Fig.~\ref{ReconInterp}. 

In the following, in Fig.~\ref{CAccuracy}-b we provide numerical results for the generalization bounds for the GRM, described in Section~\ref{TheoGenReplay},  where risk1 and risk2 denote $E(h^{\rm{1}}(\mu_1))$ and $E(h^{\rm{1}}(\mu_{1'}))$, respectively. We find that $E(h^{\rm{1}}(\mu_1))$ is very close to $E(h^{\rm{1}}(\mu_{1'}))$ and still a bound on $E({h^{\rm{1}}}(\mu_{1'}))$ during the training. Fig.~\ref{CAccuracy}-c provides the numerical results for the observed risks under the MNIST-MNIST lifelong learning. We observe that $E({h^{\rm{1}}}(\mu_1))$ is still a bound on $E({h^{\rm{1}}}(\mu_{1'}))$ and this bound is gradually increased during the training. The reason for this is that the model is gradually adapting $p({\bf{\tilde x}}^1)$ and the bound depends on the distance between $p({\bf{\tilde x}}^1)$ and $p({\bf x}^1)$.

\begin{figure}[htbp]
	\centering
		\setlength{\abovecaptionskip}{0pt}   
    \setlength{\belowcaptionskip}{-1cm}  
	\subfigure[Forgetting curves]{
		\centering
		\includegraphics[scale=0.23]{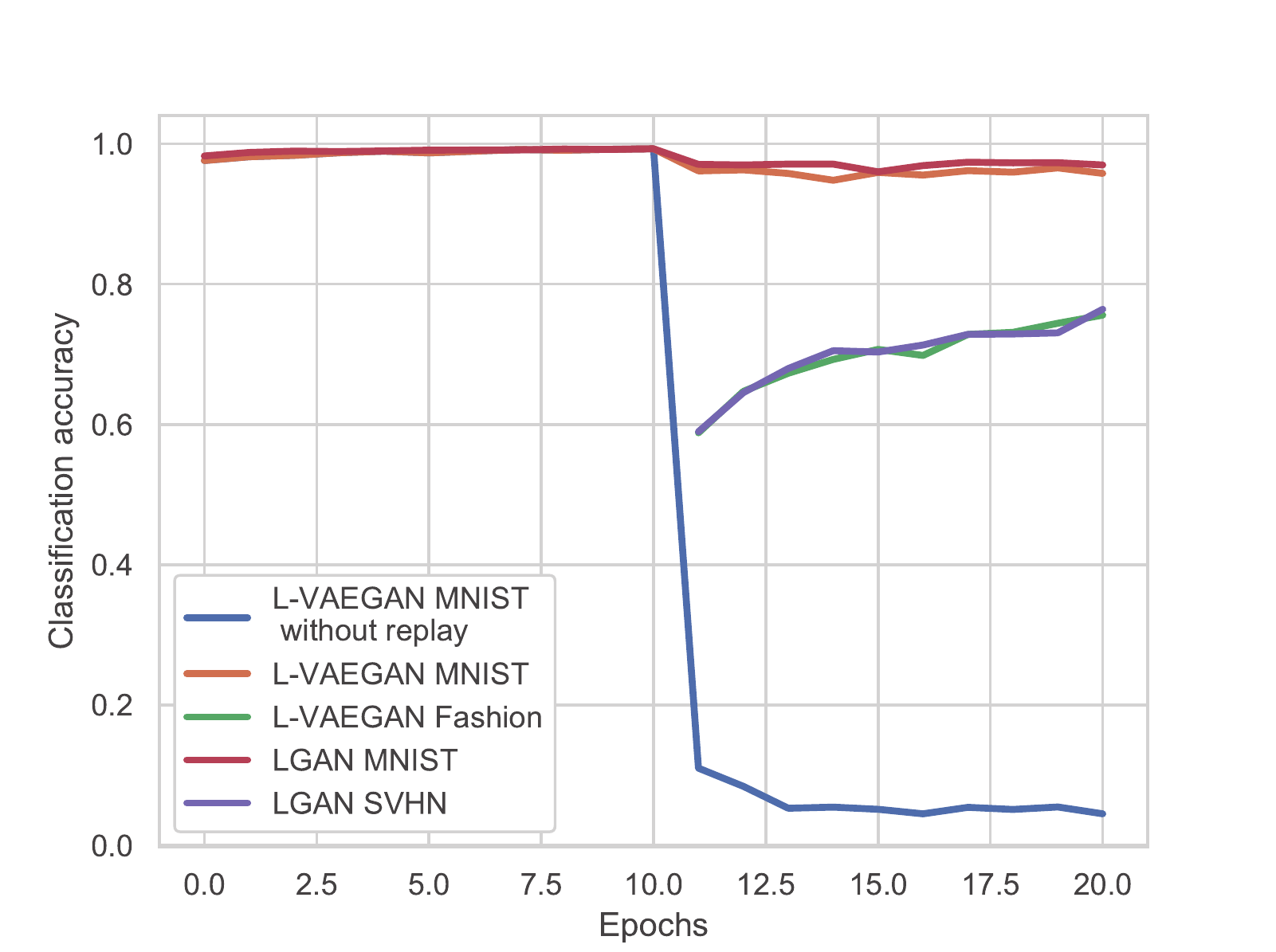}
	}
		\subfigure[MNIST-Fashion ]{
		\centering
		\includegraphics[scale=0.23]{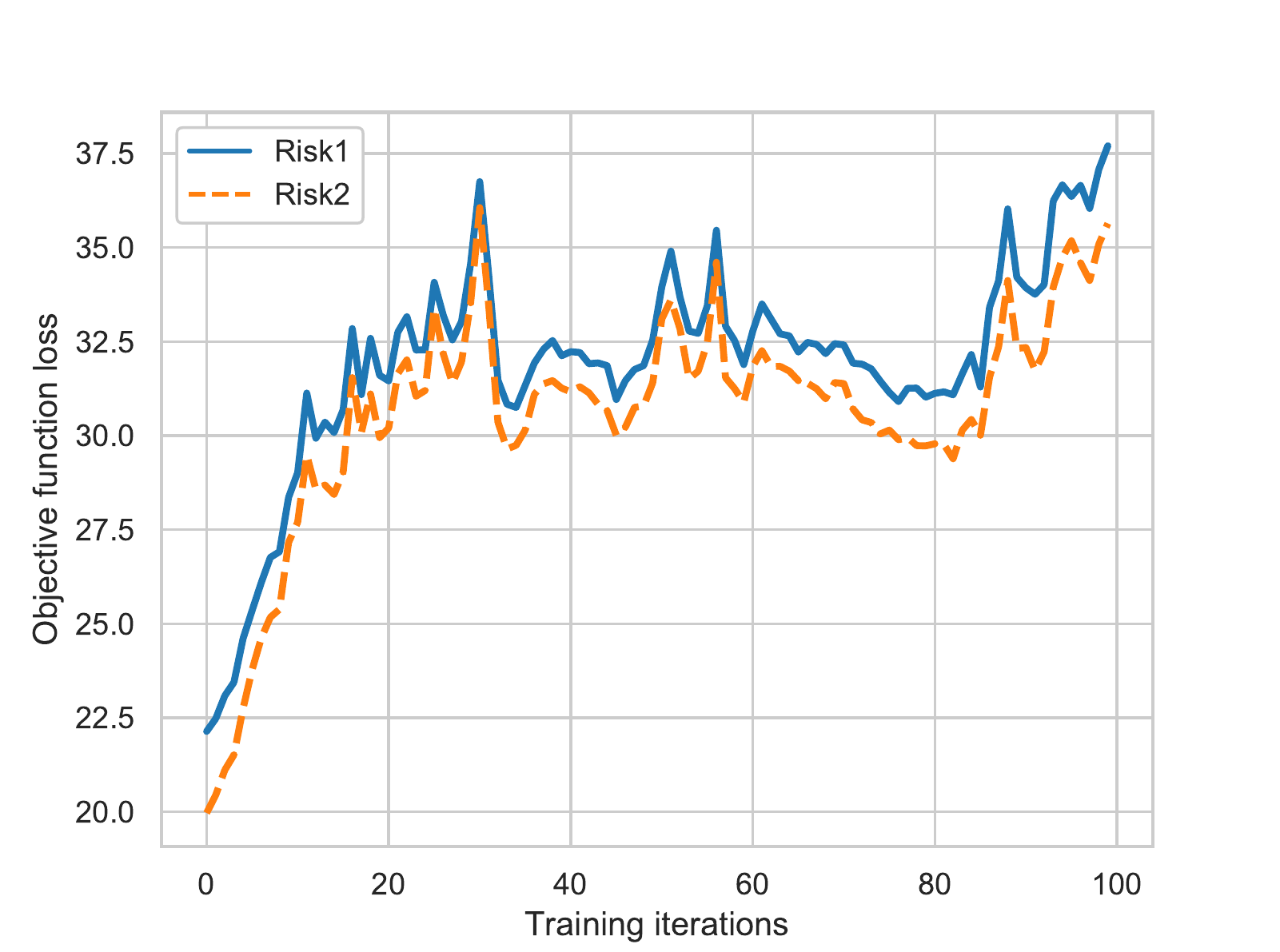}
	}
		\subfigure[MNIST-MNIST]{
		\centering
		\includegraphics[scale=0.23]{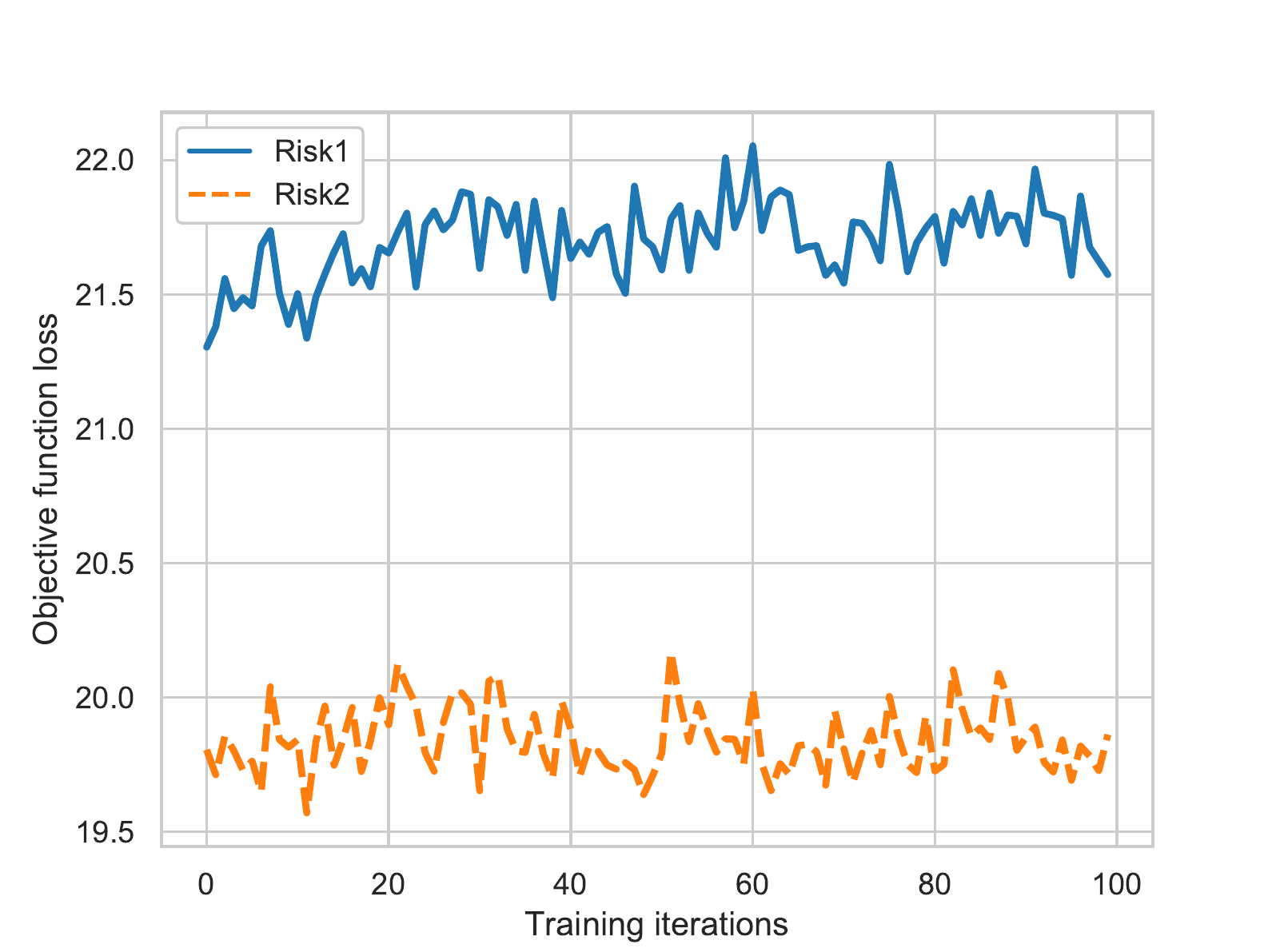}
	}
	\centering
	\caption{Classification accuracy and observed risks during the lifelong learning.}
	\label{CAccuracy}
\end{figure}

\section{Conclusion}
\label{Con}

A novel hybrid model for lifelong representation learning, called Lifelong VAEGAN (L-VAEGAN) is proposed in this research study. A two-step optimization algorithm, which can also induce higher-quality generative replay samples and learn informative latent representations over time, is used to train L-VAEGAN. The results indicate that L-VAEGAN model is able to discover disentangled representations from multiple domains under the lifelong learning setting. More importantly, L-VAEGAN automatically learns semantic meaningful shared latent variables across different domains, which allows to perform cross-domain interpolation and inference.

\clearpage
%
%

\end{document}